\newtheorem{thm}{Theorem}[section]
\newtheorem{lem}[thm]{Lemma}
\newtheorem{prop}[thm]{Proposition}
\newtheorem{defn}{Definition}[section]
\newtheorem{rem}{Remark}[section]
\newtheorem{ex}{Example}[section]
\begin{document}
\title{A Further Generalization of the Finite-Population Geiringer-like Theorem for POMDPs to Allow Recombination Over
Arbitrary Set Covers}

\author{Boris Mitavskiy\thanks{\href{mailto:bom4@aber.ac.uk}{Email: bom4@aber.ac.uk}}~  and Jun He\\Department of Computer Science\\ Aberystwyth University, Aberystwyth, SY23 3DB, U.K.}

\maketitle
\begin{abstract}
A popular current research trend deals with expanding the Monte-Carlo tree search sampling methodologies to the environments with uncertainty and incomplete information. Recently a finite population version of Geiringer theorem with nonhomologous recombination has been adopted to the setting of Monte-Carlo tree search to cope with randomness and incomplete information by exploiting the entrinsic similarities within the state space of the problem. The only limitation of the new theorem is that the similarity relation was assumed to be an equivalence relation on the set of states. In the current paper we lift this ``curtain of limitation" by allowing the similarity relation to be modeled in terms of an arbitrary set cover of the set of state-action pairs.
\end{abstract}

\section{Introduction}\label{IntroSect}
In recent years Monte-Carlo sampling methods, such as Monte Carlo tree search, have achieved tremendous success in model free reinforcement learning with, perhaps the most
celebrated example, being the computer Go software that has beaten the top human player (see \cite{MCTGoChesl}, for instance). Unlike the traditional techniques such as the mini-max method (that, by the way, have not succeeded for ``Go"), Monte-Carlo Tree search (MCT) is based on running simulated self-plays, called ``rollouts" until the end of the game when a terminal state with a known payoff has been encountered. A position in a game is represented by a state-action pair $\vec{s} = (s, \, \vec{\alpha})$ where $\vec{\alpha} = (\alpha_1, \, \alpha_2, \ldots, \alpha_{l(s)})$ is the collection of actions (or moves) available in a position $s$. The goal is to evaluate actions through back-propagation and averaging.  Much effort in the current research is devoted to widening the range of applicability of the method in
the environments with randomness and incomplete information. Recently, a finite population Geiringer Theorem with non-homologous recombination has been adopted to the setting of Monte-Carlo tree search, based on which efficient parallel algorithms that exploit the intrinsic similarities within the state-action space to increase exponentially the size of a simulated sample of rollouts can be developed. The importance of similarity relations when coping with POMDPs is emphasized by other researchers, see, for instance, \cite{POMDPsSymmetries}. The main idea is that the algorithms sample directly from a long term probability distribution of repeated recombination applications defined in terms of the similarities. The only limitation of the new theorem is that the notion of similarity is limited to equivalence relations partitioning the set of state-action pairs. The current work lifts this ``curtain of limitation" by allowing the similarity relation to be any set cover of the set of state-action pairs. The reader is strongly encouraged to familiarize themselves with the the first four sections of \cite{MitavRowGeirNewMain} prior to reading the current sequel paper, where a much more detailed introduction as well as mathematical support is provided.
\section{Mathematical Framework and Notation}\label{MathFrameworkSect}
\subsection{Set Cover of the State Space}\label{equivSimSect}
Let $S$ denote the set of states (enormous but finite in the current framework). Formally each state $\vec{s} \in S$ is an ordered pair $(s, \vec{\alpha})$ where $\vec{\alpha}$ is the set of actions an agent can possibly take when in the state $\vec{s}$. Let $\mathcal{C}$ denote an arbitrary set cover of $S$ (i.e. $\mathcal{C} \subseteq \mathcal{P}(S)$ is a collection of subsets of $S$ such that $\bigcup_{O \in \mathcal{C}}O = S$). Given a set $O \in \mathcal{C}$, for any two states $\vec{s}_1$ and $\vec{s}_2 \in O$ we will say that $\vec{s}_1$ and $\vec{s}_2$ are $O$-\emph{similar states} and write $\vec{s}_1 \overset{O}{\sim} \vec{s}_2$. Intuitively, the sets $O$ represent certain measure of similarity between the states $\vec{s}_1$ and $\vec{s}_2$. In practice, of course, if $\vec{s}_1 \overset{O}{\sim} \vec{s}_2$, the corresponding sets of actions $\vec{\alpha}_1$ and $\vec{\alpha}_2$ must be related in some kind of fashion: for instance, one may require that there are functions $f_{O, \, \vec{s}_1, \, \vec{s}_2}: \vec{\alpha}_1 \rightarrow \vec{\alpha}_2$ and $f_{O, \, \vec{s}_2, \, \vec{s}_1}: \vec{\alpha}_2 \rightarrow \vec{\alpha}_1$ that provide a natural similarity correspondence among the actions. Needless to say, the choice or design of such correspondences goes hand in hand with the choice or design of the set cover $\mathcal{C}$. In fact, there is a variety of ways in which this can be modeled depending on the specific applications. We leave the detailed investigations for future research.

Unlike the framework in \cite{MitavRowGeirNewMain}, the current setting allows various degrees and types of similarity relations that are not limited to partitions of $S$ induced by equivalence relations. Consider, for instance, a set covering induced by a distance function $d: S \times S \rightarrow [0, \, \infty)$ satisfying the usual axioms of a pseudo-metric: $\forall \, x, \, y, \, z \in S$ we have $d(x, \, x) = 0$, $d(x, \, y) = d(y, \, x)$ and $d(x, \, z) \leq d(x, \, y)+d(y, \, z)$. Such a pseudo-metric naturally induces a neighborhood structure on the set of states $S$: $\mathcal{C} = \{B(x, \, \epsilon) \, | \, x \in S \text{ and } \epsilon > 0\}$ where $B(x, \, \epsilon) = \{y \, | \, y \in S \text{ and }d(x, y) < \epsilon\}$ is a clopen ball\footnote{Since the set $S$ is finite, the pseudo-metric (i.e. the distance function) can take only finitely many values thereby inducing the discrete topology on the set $S$ (see, for instance, \cite{SimmonsG} to learn about basic point-set topology). The induced topological space is then totally disconnected so that every open ball is also a closed ball, usually abbreviated as ``clopen". In fact, in such a case it is sufficient to consider only integer-valued pseudo-metrics, yet depending on a specific application it may be sometimes more convenient to consider real or rational valued pseudo-metrics.}. Notice further that $\mathcal{C}$ has a number of subcovers such as, for instance, $\mathcal{C}_r = \{B(x, \, \epsilon) \, | \, x \in S \text{ and } r > \epsilon > 0\}$ where $r > 0$ and each one of these could be used in specific applications. As mentioned in the introduction, due to an overwhelming number of states as well as incomplete information and randomness, some sort of a coarse graining of the set of states is inevitable in applications. As pointed out in \cite{MitavRowGeirNewMain}, the primary idea of the Geiringer-like theorems for decision making in the environments with randomness and incomplete information is to exploit the similarity relations on the set of states to estimate the average action payoffs based on an exponentially larger sample of rollouts than the one simulated at a relatively little computational expense. The version of the Geiringer-like theorem presented in this paper greatly expands the variety of the similarity relations that can be used in practice, thereby significantly widening the range and the flexibility of applications.

Just as in \cite{MitavRowGeirNewMain}, a convenient way to represent a similarity relation $\mathcal{C}$ on a set of states $S$ is to assign a positive integer to each similarity set $O \in \mathcal{C}$ in a one-to-one fashion. Each element of a set $O$ labeled by an integer $l$ is then uniquely determined by an additional alphabet symbol. Unlike the case in \cite{MitavRowGeirNewMain}, it is possible for the same state to be labeled in a number of different ways as long as the corresponding integer labels differ. An example appears below.
\begin{ex}\label{setCoverEx}
A state space $S$ consisting of $13$ states and a set cover $\mathcal{C}$ consisting of $7$ sets are pictured in figure $1$. Notice that $\mathbf{1} = \{1a, \, 1b, \, 1c\}$, $\mathbf{3} = \{3a, \, 3b, \, 3c, \, 3d\}$, $\mathbf{2} = \mathbf{1} \cap \mathbf{3} = \{1a\} = \{2a\} = \{3a\}$, $\mathbf{5} = \{5a, \, 5b, \, 5c\}$, $\mathbf{6} = \{6a, \, 6b, \, 6c\}$, $\mathbf{4} = \mathbf{3} \cap \mathbf{6}$ consists of 2 elements that can be written as pictured on the Venn diagram on figure $1$ and, finally, $\mathbf{7} = \{7a, \, 7b, \, 7c\}$. For $\mathbf{i} \neq \mathbf{j}$ we have $\mathbf{i} \cap \mathbf{j} = \emptyset$ unless $\{i, \, j\} = \{1, \, 3\}, \, \{1, \, 2\}, \, \{3, \, 2\}, \, \{3, \, 6\}, \, \{3, \, 4\}$ or $\{4, \, 6\}$.
\end{ex}
\begin{figure}[ht]\label{setCoverFigure}
\centering
\includegraphics[height=2.5in, width=3in]{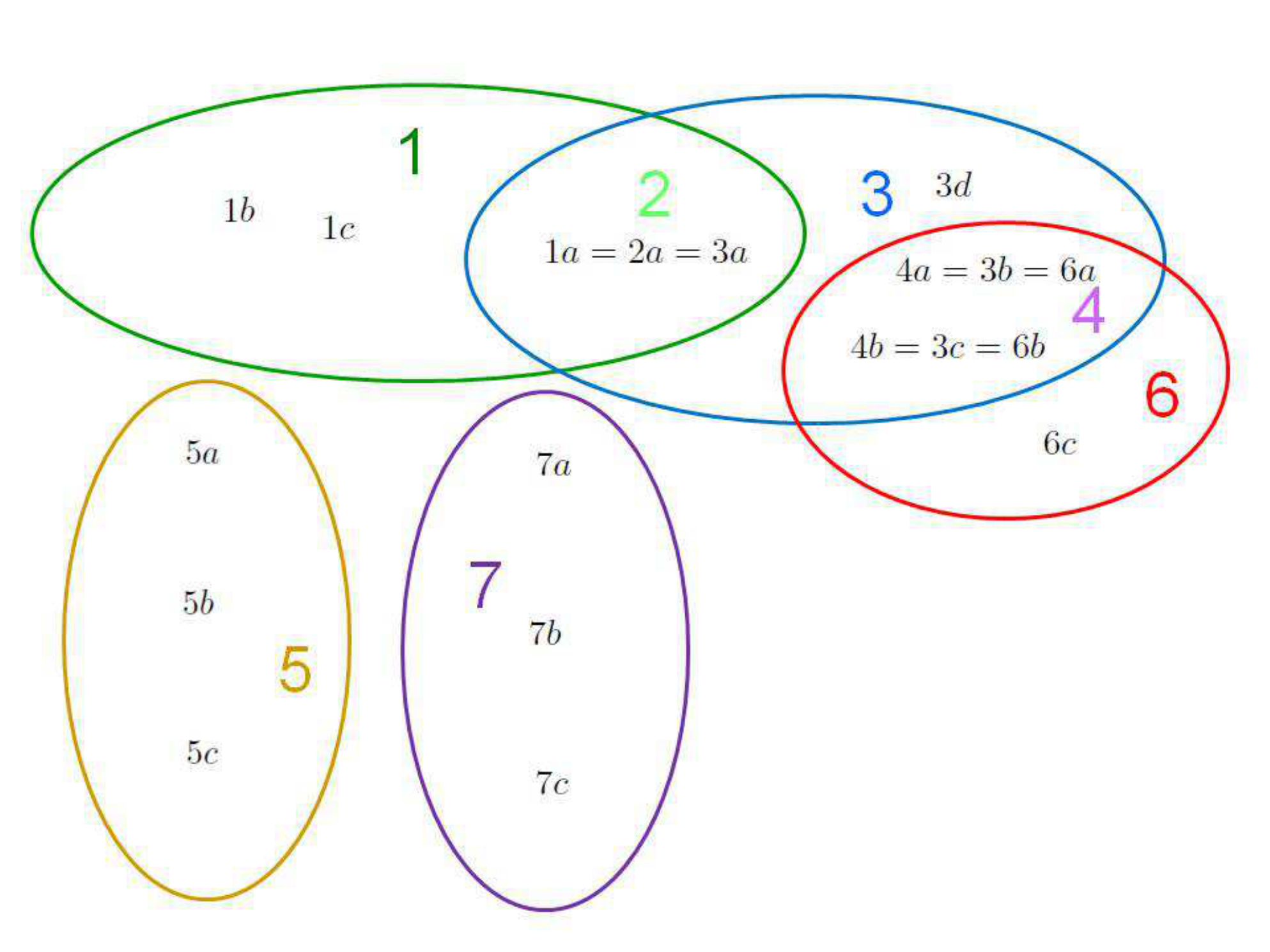}
\caption{A set cover of a 13-element set of states consisting of 7 sets.}
\end{figure}
We end this section with a couple of simple definitions, the significance of which will become very clear in section~\ref{GeirThmStSubsect}, when we state the main theorem of the current article.
\begin{defn}\label{transClosureDefn}
Given a set of state-action pairs $S$ and a set cover $\mathcal{C}$ of $S$, consider the relation $\backsim \, \subseteq S^2$ defined as $x \backsim y \, \Longleftrightarrow \, \exists \, O \in \mathcal{C}$ such that $x$ and $y \in O$ and notice that $\backsim$ is a symmetric and reflexive relation. We denote by $\overline{\mathcal{C}}$ the partition induced by the transitive closure of the relation $\backsim$, we will denote this equivalence relation (the transitive closure) by $\simeq$.
\end{defn}
\begin{rem}\label{equivRelRem}
Apparently, $$\mathcal{C} = \overline{\mathcal{C}} \Longleftrightarrow \mathcal{C} \text{ is a partition of } S \Longleftrightarrow \, \backsim = \simeq.$$
\end{rem}
An example below illustrates definition~\ref{transClosureDefn}
\begin{ex}\label{transClosureEx}
Continuing with example~\ref{setCoverEx}, consider the $13$-element set of state-action pairs and the set cover pictured on figure $1$. Then the symmetric relation
$$\backsim \, = \{(x, \, y) \, | \, i \in \mathbb{N} \text{ with } 1 \leq i \leq 7, \; x \in i \, \wedge \,  y \in i\},$$ the partition
$\overline{\mathcal{C}} = \{1 \cup 3 \cup 6, \, 5, \, 7\}$ and the equivalence relation $\simeq$ is the one corresponding to the partition $\overline{\mathcal{C}}$.
\end{ex}
\begin{defn}\label{EquivClassCompDef}
Given any set $O \in \mathcal{C}$, we will say that the unique equivalence class in $\overline{\mathcal{C}}$, call it $\overline{O}$, that contains $O$ is the expansion of the similarity set $O$.
\end{defn}
\begin{ex}\label{expEx}
Continuing with examples~\ref{setCoverEx} and \ref{transClosureEx}, we have $\overline{1} = \overline{2} = \overline{3} = \overline{4} = \overline{6} = 1 \cup 3 \cup 6$ while $\overline{5} = 5$ and $\overline{7} = 7$.
\end{ex}

\subsection{Rollouts and Recombination Operators}
\begin{defn}\label{treeRootedByChanceNode}
Suppose we are given a chance node $\vec{s} = (s, \vec{\alpha})$ and a sequence $\{\alpha_i\}_{i=1}^b$ of actions in $\vec{\alpha}$ (it is possible that $\alpha_i = \alpha_j$ for $i \neq j$). We may then call $\vec{s}$ a \emph{root state}, or a \emph{state in question}, the sequence $\{\alpha_i\}_{i=1}^b$, the \emph{sequence of moves (actions) under evaluation} and the set of moves $\mathcal{A} = \{\alpha \, | \, \alpha = \alpha_i$ for some $i$ with $1 \leq i \leq b\}$, the set of actions (or moves) under evaluation.
\end{defn}
\begin{defn}\label{RolloutDefn}
A \emph{rollout} with respect to the state in question $\vec{s} = (s, \vec{\alpha})$ and an action $\alpha \in \vec{\alpha}$ is a sequence of states following the action $\alpha$ and ending with a terminal label $f \in \Sigma$ where $\Sigma$ is an arbitrary set of labels\footnote{Intuitively, each terminal label in the set $\Sigma$ represents a terminal state that we can assign a numerical value to via a function $\phi: \, \Sigma \rightarrow \mathbb{Q}$. The reason we introduce the set $\Sigma$ of formal labels as opposed to requiring that each terminal label is a rational number straight away, is to avoid confusion in the upcoming definitions}, which looks as $\{(\alpha, \, s_1, \, s_2, \ldots, s_{t-1}, \, f)\}$. For technical reasons which will become obvious later we will also require that $s_i \neq s_j$ for $i \neq j$ (it is possible and common to have $s_i \overset{O}{\backsim} s_j$ for various $O \in \mathcal{C}$ though). We will say that the total number of states in a rollout (which is $k-1$ in the notation of this definition) is the \emph{height} of the rollout.
\end{defn}
\begin{rem}\label{rolloutDefRem}
Notice that in definition~\ref{RolloutDefn} we included only the initial move $\alpha$ made at the state in question (see definition~\ref{treeRootedByChanceNode}) which is the move under evaluation (see definition~\ref{treeRootedByChanceNode}). The moves between the intermediate states are chosen randomly or with respect to some dynamically updated distributions and are not evaluated so that there is no reason to consider them.
\end{rem}
A single rollout provides rather little information about an action particularly due to the combinatorial explosion in the branching factor of possible moves of the player and the opponents. Normally a large, yet comparable with total resource limitations, number of rollouts is thrown to evaluate the actions at various positions. The challenging question which the current work addresses is how one can take full advantage of the parallel sequence of rollouts. Since the main idea is motivated by Geiringer theorem which is originated from population genetics (\cite{GeirOrigion}) and later has also been involved in evolutionary computation theory (\cite{PoliGeir}, \cite{MitavRowGeirMain} and \cite{MitavRowGeirGenProgr}) we shall exploit the terminology of the evolutionary computation community here.
\begin{defn}\label{popOfRolloutsDefn}
Given a state in question $\vec{s} = (s, \vec{\alpha})$ and a sequence $\{\alpha_i\}_{i=1}^b$ of moves under evaluation (in the sense of definition~\ref{treeRootedByChanceNode}) then a \emph{population} $P$ with respect to the state $\vec{s} = (s, \vec{\alpha})$ and the sequence $\{\alpha_i\}_{i=1}^b$ is a sequence of rollouts $P = \{r_i^{l(i)}\}_{i=1}^b$ where $r_i = \{(\alpha_i, \, s_1^i, \, s_2^i, \ldots, s^i_{l(i)-1}, \, f_i)\}$. Just as in definition~\ref{RolloutDefn} we will assume that $s_k^i \neq s_q^j$ whenever $i \neq j$ (which, in accordance with definition~\ref{RolloutDefn}, is as strong as requiring that $s_k^i \neq s_q^j$ whenever $i \neq j$ or $k \neq q$)\footnote{The last assumption that all the states in a population are formally distinct (although they may be similar with respect to various sets in $\mathcal{C}$) will be convenient later to extend the crossover operators from pairs to the entire populations. This assumption does make sense from the intuitive point of view as well since the exact state in most games involving randomness or incomplete information is simply unknown.} Moreover, we also assume that the terminal labels $f_i$ are also all distinct within the same population, i.e. for $i \neq j$ the terminal labels $f_i \neq f_j$\footnote{This assumption does not reduce any generality since one can choose an arbitrary (possibly a many to one) assignment function $\phi: \Sigma \rightarrow \mathbb{Q}$, yet the complexity of the statements of our main theorems will be mildly alleviated.} In a very special case when $(\exists \, O \in \mathcal{C}$ such that $s_j^i \overset{O}{\backsim} s_k^q) \Longrightarrow (j=k)$ we will say that the population $P$ is \emph{homologous}. Loosely speaking, a homologous population is one where similar states can not appear at different ``heights".
\end{defn}
\begin{rem}\label{popOfRolloutsRem}
Each rollout $r_i^{l(i)}$ in definition~\ref{popOfRolloutsDefn} is started with the corresponding move $\alpha_i$ of the sequence of moves under evaluation (see definition~\ref{treeRootedByChanceNode}). It is clear that if one were to permute the rollouts without changing the actual sequences of states the corresponding populations should provide identical values for the corresponding actions under evaluation. In fact, most authors in evolutionary computation theory (see \cite{VoseM}, for instance) do assume that such populations are equivalent and deal with the corresponding equivalence classes of multisets corresponding to the individuals (these are sequences of rollouts). Nonetheless, when dealing with finite-population Geiringer-like theorems it is convenient for technical reasons (see also \cite{MitavRowGeirMain} and \cite{MitavRowGeirGenProgr}) to assume the \emph{ordered multiset model} i.e. the populations are considered formally \emph{distinct} when the individuals are permuted. Incidentally, ordered multiset models are useful for other types of theoretical analysis in \cite{ShmittL1} and \cite{ShmittL2}.
\end{rem}
\begin{ex}\label{popEx}
An example of a population of rollouts from the state space $S$ described in example~\ref{setCoverEx} with the set cover pictured in figure $1$ appears in figure $2$.
\end{ex}
\begin{figure}[ht]\label{populationFigure}
\centering
\includegraphics[height=2.8in, width=3.8in]{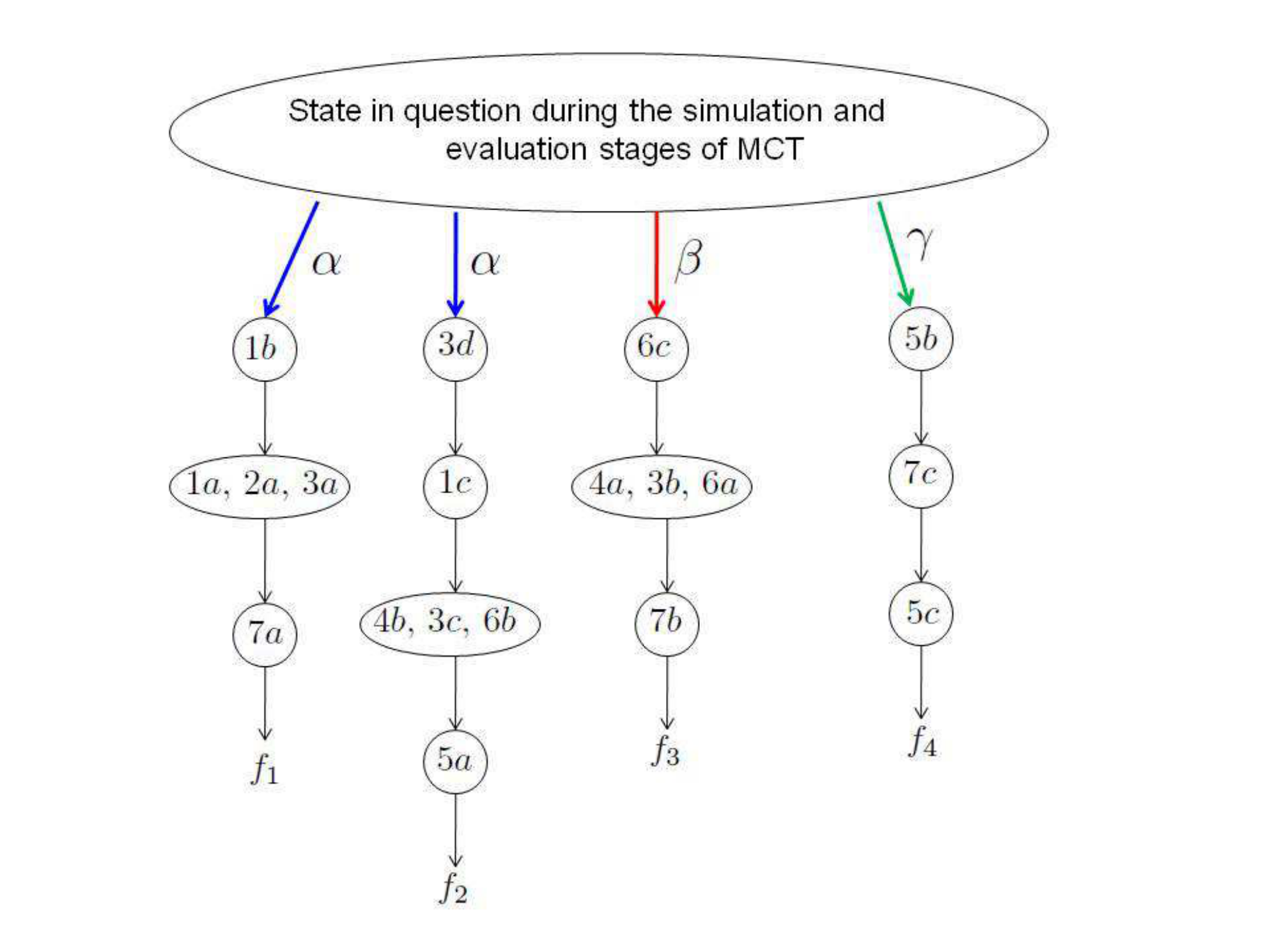} 
\caption{A population of 4 rollouts.}
\end{figure}

The main idea is that the actions taken at similar states should be interchangeable with some probability depending on the similarity level for a number of reasons such as incomplete information or simply because they are randomly explored during the simulation stage of the MCT algorithm. In the language of evolutionary computing, such a swap of moves is called a crossover. Due to randomness or incomplete information (together with the various similarity relations which can be defined using the expert knowledge of a specific game being analyzed) in order to obtain the most out of a sample (population in our language) of the parallel rollouts it is desirable to explore all possible populations obtained by making various swaps of the corresponding rollouts in similar positions. Computationally this task seems expensive if one were to run the type of genetic programming described precisely below, yet, it turns out that we can predict exactly what the limiting outcome of this ``mixing procedure" would be.\footnote{In this paper we will need to ``inflate" the population first and then take the limit of a sequence of these limiting procedures as the inflation factor increases. All of this will be rigorously presented and discussed in subsection~\ref{GeirThmStSubsect}; see also \cite{MitavRowGeirNewMain}.} We now continue with the rigorous definitions of crossover.
Crossover or recombination operators will be defined in terms of the similarities induced by the set cover much in the same way as it has been done in \cite{MitavRowGeirNewMain}. Nonetheless, here we are not limited to a single equivalence relation on the set of states, and many more recombination operators will be introduced depending on various similarity sets in $\mathcal{C}$. For this reason it is convenient to introduce the following notion:
\begin{defn}\label{totalSetContainmentDef}
Given a state $\vec{s} \in S$, we say that the collection of sets $\vec{s}(\mathcal{C}) = \{O \, | \, O \in \mathcal{C} \text{ and } \vec{s} \in O\}$ is the \emph{collection of similarity sets of the state} $\vec{s}$. Given states $\vec{s}_1$ and $\vec{s}_2 \in S$ and a set $O \in \vec{s}_1(\mathcal{C}) \cap \vec{s}_2(\mathcal{C})$ we say that $(O, \, \vec{s}_1, \, \vec{s}_2)$ is a \emph{recombination-compatible triple}.
\end{defn}
\begin{ex}\label{recombCompTupleEx}
Continuing with example~\ref{setCoverEx}, for the states $1b$ and $3a$ we have $1b(\mathcal{C}) = \{\mathbf{1}\}$ and $3a(\mathcal{C}) = \{\mathbf{1}, \, \mathbf{2}, \, \mathbf{3}\}$ so that $(\mathbf{1}, \, 1b, \, 3a)$ is a recombination-compatible triple while $(\mathbf{2}, \, 1b, \, 3a)$ is not.
\end{ex}
For every recombination compatible triple we can introduce the following two recombination operators:
\begin{defn}\label{rolloutPartCrossDefn}
Given two rollouts $$r_1 = (\alpha_1, \, \vec{s}_1, \, \vec{s}_2, \ldots, \vec{s}_{l(1)-1}, \, f)$$ and $$r_2 = (\alpha_2, \, \vec{t}_1, \, \vec{t}_2, \ldots, \vec{t}_{l(2)-1}, \, g)$$ of lengths $k(1)$ and $k(2)$ respectively that share no state in common (i.e., as in definition~\ref{RolloutDefn}, ) there are two (non-homologous) crossover (or recombination) operators we introduce here. For a recombination-compatible triple $(O, \, \vec{u}, \, \vec{v})$ define the \emph{one-point non-homologous crossover} transformation as follows: $\chi_{O, \, \vec{u}, \, \vec{v}}(r_1, \, r_2) = (\widehat{r}_1, \, \widehat{r}_2)$ where $$\widehat{r}_1 = (\alpha_1, \, \vec{s}_1, \, \vec{s}_2, \ldots, \vec{s}_{k-1}, \, \vec{t}_q, \, \vec{t}_{q+1}, \ldots, \vec{t}_{l(2)-1}, \, g)$$ and $$\widehat{r}_2 = (\alpha_2, \, \vec{t}_1, \, \vec{t}_2, \ldots, \vec{t}_{q-1}, \, \vec{s}_k, \, \vec{s}_{k+1}, \ldots, \vec{s}_{l(1)-1}, \, f)$$ if [either ($\vec{s}_k = \vec{u}$ and $\vec{t}_q = \vec{v}$) or vise versa: ($\vec{s}_k = \vec{v}$ and $\vec{t}_q = \vec{u}$)] and $(\widehat{r}_1, \, \widehat{r}_2) = (r_1, \, r_2)$ otherwise.

Likewise, we introduce a \emph{single position swap crossover}: \\$\nu_{O, \, \vec{u}, \, \vec{v}}(r_1, \, r_2) = (\widetilde{r}_1, \, \widetilde{r}_2)$ where
$$\widetilde{r}_1 = (\alpha_1, \, \vec{s}_1, \, \vec{s}_2, \ldots, \vec{s}_{k-1}, \, \vec{t}_q, \, \vec{s}_{k+1}, \ldots, \vec{s}_{l(1)-1}, \, f)$$ while
$$\widetilde{r}_2 = (\alpha_2, \, \vec{t}_1, \, \vec{t}_2, \ldots, \vec{t}_{q-1}, \, \vec{s}_k, \, \vec{t}_{q+1}, \ldots, \vec{t}_{l(2)-1}, \, g)$$ if [either ($\vec{s}_k = \vec{u}$ and $\vec{t}_q = \vec{v}$) or vise versa: ($\vec{s}_k = \vec{v}$ and $\vec{t}_q = \vec{u}$)] and $(\widehat{r}_1, \, \widehat{r}_2) = (r_1, \, r_2)$ otherwise.

In addition, a singe swap crossover is defined not only on the pairs of rollouts but also on a single rollout swapping the positions of the $O$-similar states $\vec{u}$ and $\vec{v}$ in the analogous manner:
If $$r = (\alpha, \, \vec{s}_1, \ldots, \vec{s}_{j-1}, \, \vec{s}_j, \, \vec{s}_{j+1}, \ldots, \vec{s}_{k-1}, \, \vec{s}_k, \, \vec{s}_{k+1}, \ldots, \vec{s}_{l-1}, \, f)$$ and [either ($\vec{s}_k = \vec{u}$ and $\vec{s}_q = \vec{v}$) or vise versa: ($\vec{s}_k = \vec{v}$ and $\vec{s}_q = \vec{u}$)] then $$\nu_{O, \, \vec{u}, \, \vec{v}}(r) = (\alpha, \, \vec{s}_1, \ldots, \vec{s}_{j-1}, \, \vec{s}_k, \, \vec{s}_{j+1}, \ldots$$$$ \ldots, \vec{s}_{k-1}, \, \vec{s}_j, \, \vec{s}_{k+1}, \ldots, \vec{s}_{l-1}, \, f$$ if [either ($\vec{s}_j = \vec{u}$ and $\vec{s}_k = \vec{v}$) or vise versa: ($\vec{s}_j = \vec{v}$ and $\vec{t}_k = \vec{u}$)] and, of course, $\nu_{O, \, \vec{u}, \, \vec{v}}(r)$ fixes $r$ (i.e. $\nu_{O, \, \vec{u}, \, \vec{v}}(r) = r$) otherwise.

\end{defn}
\begin{rem}\label{distConvRem}
Notice that definition~\ref{rolloutPartCrossDefn} makes sense thanks to the assumption that no rollout contains an identical pair of states in definition~\ref{RolloutDefn}.
\end{rem}
\begin{rem}\label{motivationRem}
Intuitively, performing one point crossover means that the corresponding player might have changed their strategy in a similar situation due to randomness and a single swap crossover corresponds to the player not knowing the exact state they are in due to incomplete information, for instance.
\end{rem}
Just as in case of defining crossover operators for pairs of rollouts, thanks to the assumption that all the states in a population of rollouts are formally distinct (see definition~\ref{popOfRolloutsDefn}), it is easy to extend definition~\ref{rolloutPartCrossDefn} to the entire populations of rollouts. In view of remark~\ref{motivationRem}, to get the most informative picture out of the sequence of parallel rollouts one would want to run the genetic programming routine without selection and mutation and using only the crossover operators specified above for as long as possible and then, in order to evaluate a certain move $\alpha$, collect the weighted average of the terminal values (i. e. the values assigned to the terminal labels via some rational-valued assignment function) of all the rollouts starting with the move $\alpha$ which ever occurred in the process. We now describe precisely what the process is.

\begin{defn}\label{recombActOnPopsDef}
Given a population $P$ and a transformation of the form $\chi_{O, \, \vec{u}, \, \vec{v}}$, there exists at most one pair of distinct rollouts in the population $P$, namely the pair of rollouts $r_1$ and $r_2$ such that the state $\vec{u}$ appears in $r_1$ and the state $\vec{v}$ appears in $r_2$. If such a pair exists, then we define the recombination transformation $\chi_{O, \, \vec{u}, \, \vec{v}}(P) = P'$ where $P'$ is the population obtained from $P$ by replacing the pair of rollouts $(r_1, \, r_2)$ with the pair $\chi_{O, \, \vec{u}, \, \vec{v}}(r_1, \, r_2)$ as in definition~\ref{rolloutPartCrossDefn}. In any other case we do not make any change, i.e. $\chi_{O, \, \vec{u}, \, \vec{v}}(P) = P$. The transformation $\nu_{O, \, \vec{u}, \, \vec{v}}(P)$ is defined in an entirely analogous manner with one more amendment: if the states $\vec{u}$ and $\vec{v}$ appear within the same individual (rollout), call it $$r = (\alpha, \, \vec{s}_1, \ldots, \vec{s}_2, \ldots, \vec{u}, \ldots, \vec{v}, \ldots, \vec{s}_{l-1}, \, f),$$ and the state $\vec{u}$ precedes the state $\vec{v}$, then these states are interchanged obtaining the new rollout $$r' = (\alpha, \, \vec{s}_1, \ldots, \vec{s}_2, \ldots, \vec{v}, \ldots, \vec{u}, \ldots, \vec{s}_{l-1}, \, f).$$ Of course, it could be that the state $v$ precedes the state $u$ instead, in which case the definition would be analogous: if $$r = (\alpha, \, \vec{s}_1, \ldots, \vec{s}_2, \ldots, \vec{v}, \ldots, \vec{u}, \ldots, \vec{s}_{l-1}, \, f)$$ then replace the rollout $r$ with the rollout $$r'=(\alpha, \, \vec{s}_1, \ldots, \vec{s}_2, \ldots, \vec{u}, \ldots, \vec{v}, \ldots, \vec{s}_{l-1}, \, f).$$
\end{defn}
\begin{ex}\label{crossoverEx}
Continuing with example~\ref{popEx}, applying the crossover transformation $\chi_{\mathbf{3}, \, 3d, \, 3b}$ to the population we obtain the population on figure $3$.
\end{ex}

\begin{figure}\label{popCross3d3bFigure}
\centering
\includegraphics[height=2.8in, width=3.8in]{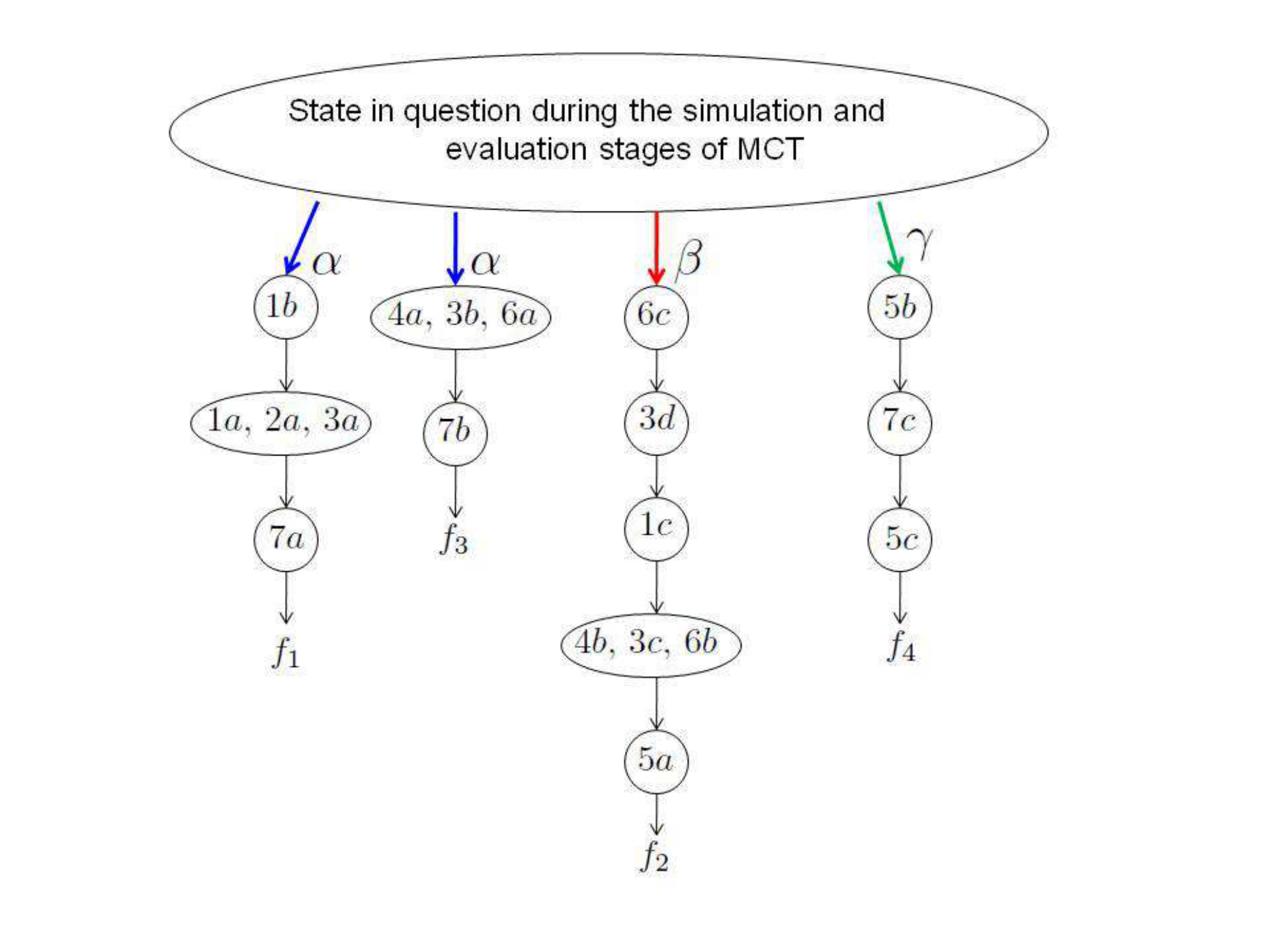} 
\caption{The population obtained after applying the crossover transformation in example~\ref{crossoverEx} to the population displayed in figure $2$.}
\end{figure}
\begin{rem}\label{BijectRem}
It is very important for the main theorem of our paper that each of the crossover transformations $\chi_{O, \, \vec{u}, \, \vec{v}}$ and $\nu_{O, \, \vec{u}, \, \vec{v}}$ is a bijection on their common domain, that is the set of all populations of rollouts at the specified chance node. As a matter of fact, the reader can easily verify by direct computation from definitions~\ref{recombActOnPopsDef} and \ref{rolloutPartCrossDefn} that each of the transformations $\chi_{O, \, \vec{u}, \, \vec{v}}$ and $\nu_{O, \, \vec{u}, \, \vec{v}}$ is an involution on its domain, i.e. $\forall \, O \in \mathcal{C}$ and $x, \, y \in O$ we have $\chi_{O, \, \vec{u}, \, \vec{v}}^2 = \nu_{O, \, \vec{u}, \, \vec{v}}^2 = \mathbf{1}$ where $\mathbf{1}$ is the identity transformation.
\end{rem}
\section{Geiringer and Geiringer-like  Theorems for POMDPs and MCT.}\label{mainIdeasSect}
\subsection{The Main Idea Of the Current Work}
Suppose a certain initial population of rollouts has been simulated during a simulation stage of the MCT. Although the rollouts have been simulated independently, similar states encountered during the simulations are likely to repeat in a number of settings.
Assume now we were to run a GP (genetic programming) routine performing swaps \\(crossovers/recombinations) of rollouts in accordance with definitions~\ref{rolloutPartCrossDefn} and \ref{recombActOnPopsDef} without any selection or mutation. Clearly the swaps correspond to potential populations of rollouts that could have been simulated just as likely, provided that something different in the environment (and/or opponent's hand) has taken place. Intuitively speaking, if we were to run the GP longer and longer time, we would be getting significantly more enriched information about potential outcomes and hence improve the quality of the payoff estimates. The central idea behind results such as the main theorem of this article (a special case where the similarity relation is an equivalence relation (i.e. $\mathcal{C}$ is a partition of $S$) has been established in \cite{MitavRowGeirNewMain}) is that one can actually anticipate the long term (or limiting) frequency of occurrence of various rollouts provided such a genetic programming routine has ran. This type of predictions is what the Geiringer-like theorems are about. In \cite{MitavRowGeirMain} a rather general simple and powerful theorem (named ``finite population Geiringer theorem") has been established in the setting of Markov chains (populations being the states of the Markov chain: more on this in the next section) which tells us that under certain conditions that are satisfied by most recombination operators, the stationary distribution of this Markov chain is uniform. Furthermore, a methodology has been developed to derive what we call ``Geiringer-like" theorems that address the limiting frequency of occurrence of various schemata (in our case subsets of rollouts: more on this in the upcoming subsection~\ref{GeirThmStSubsect}). Based on such a theorem it is not hard to invent efficient parallel dynamic algorithms that estimate the expected action payoff values based on the sample obtained after the entire ``infinite time" run of the GP routine described above. It may be worthwhile to mention that ``homologous recombination" (translating into the setting of MCT this would mean that the similarity classes may occur only at the same heights of the corresponding rollouts) versions of a Geiringer-like theorem have been obtained previously in the setting of genetic programming using the methodology appearing in \cite{MitavRowGeirMain} (see \cite{MitavRowGeirGenProgr}). A version of Geiringer-like theorem with non-homologous recombination  remained an open question and it has been established recently in \cite{MitavRowGeirNewMain} in a very similar setting as in the current article. While the Geiringer-like theorem in \cite{MitavRowGeirNewMain} is already rather interesting and powerful, it is limited to the case when the notion of similarity is measured via an equivalence relation, which does not allow any degree of similarity: any two states are either similar or not, but there is no way to judge how similar they are and every state (with the corresponding actions) of the same similarity class is evaluated indistinguishably. In the current article, we point out that this limitation can be easily alleviated to allow practically any notion of similarity among the states (i.e. an arbitrary set cover of the set of states) and, at the same time, the statement and the proof of the corresponding Geiringer like theorem are somewhat simplified. In the next section we will establish a generalization of the finite population Geiringer theorem for POMDPs in \cite{MitavRowGeirNewMain} that will allow us to derive the corresponding generalization of the Geiringer-like theorem of \cite{MitavRowGeirNewMain}.
\subsection{Specializing the Finite Population Geiringer Theorem to the setting of Monte Carlo Sampling for POMDPs}\label{specializationAndMainGeiringerThmSubSect}
\begin{defn}\label{RecombStagePopTransDefn}
Let $\mathbf{n} = \{1, \, 2, \ldots, n\}$ denote the set of first $n$ natural numbers. Consider any probability distribution $\mu$ on the set of all finite sequences of crossover transformations $\mathcal{F} = \bigcup_{n=1}^{\infty} \mathcal{F}_n \cup \{\mathbf{1}\}$ where $$\mathcal{F}_n = (\{\chi_{O, \, \vec{u}, \, \vec{v}} \, | \, O \in \mathcal{C} \text{ and } \vec{u}, \, \vec{v} \in O\} \cup $$$$ \cup\{\nu_{O, \, \vec{u}, \, \vec{v}} \, | \, O \in \mathcal{C} \text{ and } \vec{u}, \, \vec{v} \in O\})^{\mathbf{n}}$$ which assigns a positive probability to the singleton sequences\footnote{This technical assumption may be altered in various manner as long as the induced Markov chain remains irreducible.} and to the \emph{identity element} $\mathbf{1}$. (i.e. to every element of the subset $\mathcal{F}_1 \cup \{\mathbf{1}\}$. Given a sequence of transformations $\vec{\Theta} = \{\Theta_{O_j, \, \vec{u}_j, \, \vec{v}_j}\}_{j=1}^n$ where each $\Theta$ is either $\chi$ or $\nu$ (i.e. $\forall \, j$ either $\Theta_{O_j, \, \vec{u}_j, \, \vec{v}_j} = \chi_{O_j, \, \vec{u}_j, \, \vec{v}_j}$ or $\Theta_{O_j, \, \vec{u}_j, \, \vec{v}_j} = \nu_{O_j, \, \vec{u}_j, \, \vec{v}_j}$), consider the transformation $$\widetilde{\Theta} = \Theta_{O_n, \, \vec{u}_n, \, \vec{v}_n} \circ \Theta_{O_{n-1}, \, \vec{u}_{n-1}, \, \vec{v}_{n-1}} \circ \ldots \circ \Theta_{O_2, \, \vec{u}_2, \, \vec{v}_2} \circ \Theta_{O_1, \, \vec{u}_1, \, \vec{v}_1}$$ on the set of all populations starting at the specified chance node obtained by composing all the transformations in the sequence $\vec{\Theta}$. The identity element $\mathbf{1}$ stands for the identity map on the set of all possible populations of rollouts. Now define the Markov transition Matrix $M_{\mu}$ on the set of all populations of rollouts (see definition~\ref{popOfRolloutsDefn}) as follows: given populations $X$ and $Y$ of the same size $k$, the probability  of obtaining the population $Y$ from the population $X$ after performing a single crossover stage, $p_{X \rightarrow Y} = \mu(\mathcal{S}_{X \rightarrow Y})$ where $$\mathcal{S}_{X \rightarrow Y}=\{\Gamma \, | \, \Gamma \in \mathcal{F} \text{ and } T(\Gamma)(X) = Y\}$$ where $$T(\Gamma) = \begin{cases}
\widetilde{\Theta} & \text{ if } \Gamma = \vec{\Theta} \\
\text{The identity map} \text{ if } \Gamma = \mathbf{1}.
\end{cases}$$
\end{defn}
\begin{rem}\label{seqVsTransfRem}
Evidently the map $T: \mathcal{F} \rightarrow P^P$ introduced at the end of definition~\ref{RecombStagePopTransDefn} can be regarded as a random variable on the set $\mathcal{F}$ described at the beginning of definition~\ref{RecombStagePopTransDefn} where $P$ denotes the set of all populations of rollouts containing $k$ individuals so that $P^P$ is the set of all endomorphisms (functions with the same domain and codomain) on $P$ and the probability measure $\mu_T$ on $P^P$ is the ``pushforward" measure induced by $T$, i.e. $\mu_T(S) = \mu(T^{-1}(S))$.\footnote{The sigma algebra on $P^P$ is the one generated by $T$ with respect to the sigma-algebra that is originally chosen on $\mathcal{F}$, however in practical applications the sets involved are finite and so all the sigma-algebras can be safely assumed to be power sets.} To alleviate the complexity of verbal (or written) presentation we will usually abuse the language and use the set $\mathcal{F}$ in place of $P^P$ so that a transformation $F \in P^P$ is identified with the entire set $T^{-1}(F) \in \mathcal{F}$. For example, $$\text{if we write } \mu(\{F \, | \, F \in \mathcal{F} \text{ and } F(X) = Y\})$$$$ \text{we mean } \mu(\{\Gamma \, | \, \Gamma \in \mathcal{F} \text{ and } T(\Gamma)(X) = Y\}).$$ It may be worth pointing out that the set $T^{-1}$ is not necessarily a singleton, i.e. the map $T$ is usually not one-to-one (for instance, given any $O \in \mathcal{C}$ and any $\vec{u}, \, \vec{v} \in O$ the sequence of the form $\vec{\Theta} = (\chi_{O, \, \vec{u}, \, \vec{v}}, \, \chi_{O, \, \vec{u}, \, \vec{v}})$ or of the form $\vec{\Theta} = (\nu_{O, \, \vec{u}, \, \vec{v}}, \, \nu_{O, \, \vec{u}, \, \vec{v}})$ both induce identity transformation on the set of populations of rollouts. Indeed, $\chi_{O, \, \vec{u}, \, \vec{v}} \circ \chi_{O, \, \vec{u}, \, \vec{v}} = \nu_{O, \, \vec{u}, \, \vec{v}} \circ \nu_{O, \, \vec{u}, \, \vec{v}} = \mathbf{1}$ since performing a swap at identical positions twice brings back the original population of rollouts.)
\end{rem}
\begin{rem}\label{closureUnderConcatRem}
Notice that any concatenation of sequences in $\mathcal{F}$ (which is what corresponds to the composition of the corresponding functions) stays in $\mathcal{F}$. In other words, the family of maps induced by $\mathcal{F}$ is closed under composition.
\end{rem}
Of course, running the Markov process induced by the transition matrix in definition~\ref{RecombStagePopTransDefn} infinitely long is impossible, but fortunately one does not have to do it. The central idea of the current paper is that the limiting outcome as time goes to infinity can be predicted exactly using the Geiringer-like theory and the desired evaluations of moves can be well-estimated at rather little computational cost in most cases. As pointed out in remark~\ref{seqVsTransfRem} above, each of the transformations $\Theta_{O, \, \vec{u}, \, \vec{v}}$ is an involution and, in particular, is bijective. Therefore, every composition of these transformations is a bijection as well. We deduce, thereby, that the family $\mathcal{F}$ consists of bijections only (see remark~\ref{seqVsTransfRem}). The finite population Geiringer theorem (see \cite{MitavRowGeirMain}) now applies and tells us the following:
\begin{defn}\label{equivRelForMCTDefn}
Given populations $P$ and $Q$ of rollouts at a specified state in question as in definition~\ref{popOfRolloutsDefn}, we say that $P \sim Q$ if there is a transformation $F \in \mathcal{F}$ such that $Q = F(P)$.
\end{defn}
\begin{ex}\label{equivPopEx}
The populations $P$ and $Q$ and $R$ displayed in figures $2$, $3$ and $4$ respectively are equivalent in the sense of definition~\ref{equivRelForMCTDefn} since $Q = \chi_{\mathbf{3}, \, 3d, \, 3b}(P)$, $R = \nu_{\mathbf{5}, \, 5a, \, 5b} \circ \chi_{\mathbf{5}, \, 5a, \, 5b} \circ \chi_{\mathbf{4}, \, 4a, \, 4b}(Q)$
\end{ex}
\begin{figure}\label{equivPopFigure}
\centering
\includegraphics[height=2.8in, width=3.8in]{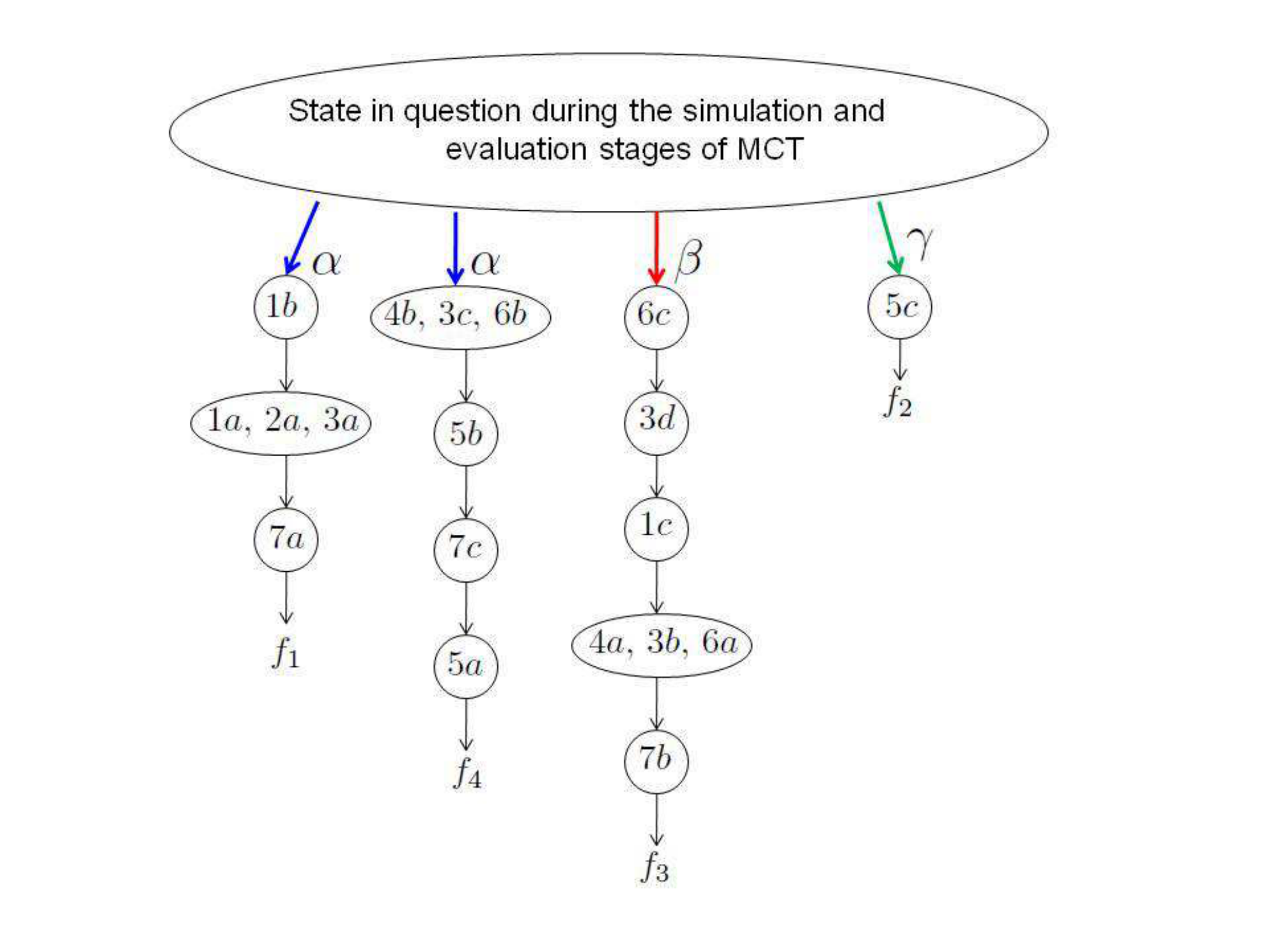} 
\caption{The population $R \sim Q \sim P$.}
\end{figure}
\begin{thm}[Geiringer Theorem for POMDPs]\label{GeirThmForMCTMain}
The relation $\sim$ introduced in definition~\ref{equivRelForMCTDefn} is an equivalence relation. Given a population $P$ of rollouts at a specified state in question, the restriction of the Markov transition matrix introduced in definition~\ref{RecombStagePopTransDefn} to the equivalence class $[P]$ of the population $P$ under $\sim$ is a well-defined Markov transition matrix which induces an irreducible and aperiodic Markov chain on $[P]$ and the unique stationary distribution of this Markov chain is the uniform distribution on $[P]$.\footnote{In fact, thanks to the application of the classical contraction mapping principle\footnote{This simple and elegant classical result about complete metric spaces lies in the heart of many important theorems such as the ``existence uniqueness" theorem in the theory of differential equations, for instance.} analyzed in section 6 of \cite{MitavRowGeirNewMain}) the stationary distribution is uniform in a rather strong sense as discussed in theorem 23 and example 24 of \cite{MitavRowGeirNewMain}.}
\end{thm}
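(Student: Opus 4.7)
The plan is to verify the four ingredients of the statement in sequence, all of which flow from a single observation: the image $T(\mathcal{F})$ is a \emph{subgroup} of the (finite) symmetric group on the set of populations. Indeed, it contains the identity by construction, is closed under composition by remark~\ref{closureUnderConcatRem}, and is closed under inverses because every generator $\chi_{O,\vec u,\vec v}$ and $\nu_{O,\vec u,\vec v}$ is an involution (remark~\ref{BijectRem}); hence the inverse of a composition $\Theta_n\circ\cdots\circ\Theta_1$ is the reverse composition $\Theta_1\circ\cdots\circ\Theta_n$, which again comes from a sequence in $\mathcal{F}$.

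From this I would first read off that $\sim$ is an equivalence relation: reflexivity from $\mathbf{1}\in T(\mathcal{F})$, symmetry from inverse-closure, transitivity from composition-closure. Consequently $[P]$ is precisely the $T(\mathcal{F})$-orbit of $P$. Well-definedness of the restricted transition matrix is then automatic: if $X\in[P]$ and $p_{X\to Y}>0$ then some $F\in T(\mathcal{F})$ sends $X$ to $Y$, so $Y\sim X\in[P]$; no mass escapes $[P]$, and the restricted row sums remain equal to $1$.

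Next I would check irreducibility and aperiodicity. For irreducibility, pick any $Q\in[P]$ and write $Q=F(P)$ with $F=\Theta_n\circ\cdots\circ\Theta_1$ a composition of single-step generators. Since $\mu$ puts positive mass on every singleton in $\mathcal{F}_1$, Chapman--Kolmogorov yields
\[
M_\mu^n(P,Q)\;\ge\;\prod_{i=1}^n \mu\bigl(\{\Theta_i\}\bigr)\;>\;0.
\]
Aperiodicity is then immediate from $\mu(\{\mathbf{1}\})>0$, which forces $M_\mu(X,X)>0$ at every state.

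The final and conceptually central step is to show that the restriction of $M_\mu$ to $[P]$ is \emph{doubly stochastic}; combined with irreducibility and aperiodicity, the standard ergodic theorem for finite Markov chains then pins down the unique stationary distribution as the uniform one. For this, fix $Y\in[P]$ and compute
\[
\sum_{X\in[P]} p_{X\to Y}\;=\;\sum_{X\in[P]}\mu\bigl(\{F\in\mathcal{F}:F(X)=Y\}\bigr)\;=\;\mu(\mathcal{F})\;=\;1,
\]
the middle equality being where bijectivity really does the work: each $F\in T(\mathcal{F})$ has a unique preimage $F^{-1}(Y)$ of $Y$, and that preimage lies in $[P]$ because $F$ preserves $\sim$, so the sets $\{F:F(X)=Y\}$ partition $\mathcal{F}$ as $X$ ranges over $[P]$. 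Aside from this identity everything is bookkeeping; the main obstacle is really conceptual, namely recognising that bijectivity plus inverse-closure of $\mathcal{F}$ automatically upgrade the stochastic chain to a doubly stochastic one on each orbit, and that the assumed positive mass on $\mathbf{1}$ and on singletons in $\mathcal{F}_1$ is exactly what is needed to couple this observation with the ergodic theorem.
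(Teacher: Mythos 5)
Your proof is correct and rests on exactly the observation the paper itself uses: the generators $\chi_{O,\vec u,\vec v}$ and $\nu_{O,\vec u,\vec v}$ are involutions, hence the whole family consists of bijections closed under composition and inversion, which is what makes the restricted chain doubly stochastic on each orbit $[P]$. The only difference is that the paper stops after verifying bijectivity and invokes the general finite-population Geiringer theorem of \cite{MitavRowGeirMain} as a black box, whereas you open that box and carry out its contents (orbit structure, irreducibility from positive mass on singletons, aperiodicity from $\mu(\{\mathbf 1\})>0$, and the column-sum computation via the partition $\mathcal{F}=\bigsqcup_{X\in[P]}\mathcal{S}_{X\to Y}$) explicitly and correctly.
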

Knowing that the limiting frequency of occurrence of a any two given populations $Q_1$ and $Q_2 \in [P]$ is the same, it is sometimes possible to compute the limiting frequency of occurrence of any specific rollout and even certain subsets of rollouts using the machinery developed in \cite{MitavRowGeirMain}, \cite{MitavRowGeirGenProgr} and enhanced further in \cite{MitavRowGeirNewMain}.

To state and derive these ``Geiringer-like" results we need to introduce the appropriate notions of schemata (see, for instance, \cite{Antonisse} and \cite{PoliSchema}) here.
\subsection{Schemata for MCT Algorithm}\label{schemataSubsectShallow}
\begin{defn}\label{schemaForMCTPopDefHolland}
Given a state $(s, \vec{\alpha})$ in question (see definition~\ref{treeRootedByChanceNode}), a rollout \emph{Holland-Poli schema} is a sequence consisting of entries from the set$\vec{\alpha} \cup \mathcal{C} \cup \{\#\} \cup \Sigma$ of the form $h = \{x_i\}_{i=1}^k$ for some $k \in \mathbb{N}$ such that for $k>1$ we have $x_1 \in \vec{\alpha}$, $x_i \in \mathcal{C}$ when $1 < i < k$ represents a similarity class of states, and $x_k \in \{\#\} \cup \Sigma$ could represent either a terminal label if it is a member of the set of terminal labels $\Sigma$, or any substring defining a valid rollout if it is a $\#$ sign.\footnote{This notion of a schema is somewhat of a mixture between Holland's and Poli's notions.} For $k=1$ there is a unique schema of the form $\#$. Every schema uniquely determines a set of rollouts
$R_h = \begin{cases}
\{(x_1, \, \vec{s}_2, \, \vec{s}_3, \ldots, \vec{s}_{k-1}, x_k) \, \\| \, \vec{s}_i \in x_i \text{ for } 2<i<k-1\} & \text{if } k>1 \text{ and } x_k \in \Sigma\\
\, & \,\\
\{(x_1, \, \vec{s}_2, \, \vec{s}_3, \ldots, \vec{s}_{k-1}, \, \\\vec{t}_k, \, \vec{t}_{k+1}, \ldots, f)\\
\, | \, \vec{s}_i \in x_i \text{ for } 2<i<k-1, \, \vec{t}_j \in S\} & \text{if } k > 1 \text{ and } x_k = \#\\
\, & \,\\
\text{the entire set of all possible rollouts} & \text{if } k = 1 \text{ or,} \\
\, & \text{equivalently, } h=\#.
\end{cases}$ which fit the schema in the sense mentioned above. We will often abuse the language and use the same word schema to mean either the schema $h$ as a formal sequence as above or schema as a set $R_h$ of rollouts which fit the schema. For example, if $h$ and $h^*$ is a schema, we will write $h \cap h^*$ as a shorthand notation for $R_h \cap R_{h^*}$ where $\cap$ denotes the usual intersection of sets. Just as in definition~\ref{RolloutDefn}, we will say that $k-1$, the number of states in the schema $h$, is the \emph{height} of the schema $h$.
\end{defn}
\begin{ex}\label{schemataEx}
Continuing with example~\ref{setCoverEx}, consider the rollout Holland-Poli schema $h = (\beta, \, 6, \, 3, \, 1, \, 4, \, \#)$. Then the $3^{\text{rd}}$ rollout in the population pictured in figure $4$  starting with the action $\beta$ fits the schema $h$ while the rollout $r = (\beta, \, 6c, \, 3a, \, 1c, \, 3a)$ does not since $3a = 1a = 2a \notin \mathbf{4}$ (see figure $1$).
\end{ex}
The notion of schema is useful for stating and proving Geiringer-like results largely thanks to the following notion of partial order.
\begin{defn}\label{schemaPosetDef}
Given schemata $h$ and $g$ we will write $h > g$ either if $h=\#$ and $g \neq \#$ or\\
$h=(x_1, \, x_2, \, x_3, \ldots, x_{k-1}, \, \#)$ while \\
$g = (x_1, \, x_2, \, x_3, \ldots, x_{k-1}, \, y_{k}, \, y_{k+1}, \ldots, y_{l-1}, \, y_l)$ where $y_l$ could be either of the allowable values: a $\#$ or a terminal label $f \in \Sigma$. However, if $y_l = \#$ then we require that $l > k$.
\end{defn}
An obvious fact following immediately from definitions~\ref{schemaForMCTPopDefHolland} and \ref{schemaPosetDef} is the following.
\begin{prop}\label{schemaPartOrderHollandShallow}
Suppose we are given schemata $h$ and $g$. Then $h \geq g \Longrightarrow S_h \supseteq S_g$.
\end{prop}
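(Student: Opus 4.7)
The plan is a direct case analysis that unpacks definitions~\ref{schemaForMCTPopDefHolland} and~\ref{schemaPosetDef}; no Geiringer-like machinery is needed. Writing $R_h$ and $R_g$ for the schema sets (the notation $S_h$ in the statement is presumably a typo for $R_h$ used in definition~\ref{schemaForMCTPopDefHolland}), the hypothesis $h \geq g$ splits into three cases: $h = g$; $h = \#$ with $g \neq \#$; and the generic case in which $h = (x_1, x_2, \ldots, x_{k-1}, \#)$ while $g = (x_1, x_2, \ldots, x_{k-1}, y_k, y_{k+1}, \ldots, y_l)$ with matching prefix of length $k-1$ and $y_l \in \{\#\} \cup \Sigma$ (with $l > k$ if $y_l = \#$). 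The first two cases fall out immediately: equality gives equality of the associated sets, and when $h = \#$ the last clause of definition~\ref{schemaForMCTPopDefHolland} declares $R_h$ to be the set of \emph{all} rollouts at the given chance node, so $R_g \subseteq R_h$ for free.

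The substantive case is the third. Here I would fix an arbitrary rollout $r \in R_g$ and verify the membership conditions for $R_h$ one entry at a time. By the definition of $R_g$, we may write $r = (x_1, \vec{s}_2, \ldots, \vec{s}_{k-1}, \vec{s}_k, \ldots, \vec{s}_{t-1}, f)$ with $\vec{s}_i \in x_i$ for every $i$ with $2 \leq i \leq k-1$, and with the tail $(\vec{s}_k, \ldots, \vec{s}_{t-1}, f)$ meeting whatever further constraints the suffix $(y_k, y_{k+1}, \ldots, y_l)$ of $g$ imposes. To conclude $r \in R_h$, one observes: (i) $r$ begins with the required action $x_1$; (ii) the similarity-class memberships $\vec{s}_i \in x_i$ for $2 \leq i \leq k-1$ carry over verbatim because the prefixes of $h$ and $g$ coincide; and (iii) from position $k$ onward $h$ imposes nothing beyond the requirement of being a legal rollout tail, which is automatic since $r$ is itself a rollout ending in some terminal label of $\Sigma$.

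There is no real obstacle --- the proposition is essentially a consistency check asserting that the combinatorial partial order on schemata refines reverse inclusion of the associated sets, which is exactly the feature that motivates introducing the order in the first place. The only place a slip is possible is in distinguishing the two flavors of $g$ (terminating in a terminal label $f \in \Sigma$ versus in a wildcard $\#$) when reading off the form of $r$, and in not confusing the length indices $k$ of $h$ and $l$ of $g$. Both flavors collapse to the same observation: the portion of $r$ past position $k-1$ is, by construction, the tail of an honest rollout, which is precisely what the wildcard $\#$ in position $k$ of $h$ permits.
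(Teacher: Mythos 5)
Your proposal is correct and matches the paper's treatment: the paper states this proposition as ``an obvious fact following immediately from definitions~\ref{schemaForMCTPopDefHolland} and \ref{schemaPosetDef}'' and offers no further proof, and your case analysis is precisely that immediate verification (including the correct reading of $S_h$ as $R_h$). Nothing further is needed.
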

\begin{rem}\label{posetNonIsoRem}
It may be worth pointing out that the converse of proposition~\ref{schemaPartOrderHollandShallow} is false. Continuing with examples~\ref{setCoverEx} and \ref{schemataEx}, consider, for instance, schemata \\$h = (\beta, \, 6, \, 3, \, 1, \, 4, \, \#)$ and $g = (\beta, \, 6, \, 3, \, 1, \, 6, \, \#)$. Evidently $S_g \supset S_h$ yet the schemata $h$ and $g$ are incomparable in the sense of definition~\ref{schemaPosetDef}. Fortunately, it is only proposition~\ref{schemaPartOrderHollandShallow} (and not the converse) that's involved in deriving Geiringer-like theorems from theorem~\ref{GeirThmForMCTMain}.
\end{rem}
\subsection{The Statement of Geiringer-like Theorems for the POMDPs}\label{GeirThmStSubsect}
In evolutionary computation Geiringer-like results address the limiting frequency of occurrence of a set of individuals fitting a certain schema (see \cite{PoliGeir}, \cite{MitavRowGeirMain} and \cite{MitavRowGeirGenProgr}). In this work our theory rests on the finite population model based on stationary distribution of the Markov chain of all populations potentially encountered in the process (see theorem~\ref{GeirThmForMCTMain}). The ``limiting frequency of occurrence" (rigorous definition appear in \cite{MitavRowGeirNewMain}, \cite{MitavRowGeirMain} and \cite{MitavRowGeirGenProgr}; an informal description is provided prior to the statement of the theorem in the current article) of a certain subset of individuals determined by a Holland-Poli schema $h$ among all the populations in the equivalence class $[P]$ as time increases (i.e. as $t \rightarrow \infty$), where $P$ is the initial population of rollouts, will be expressed solely in terms of the initial population $P$ and schema $h$. These quantities are defined below.
\begin{defn}\label{popStateCountDownDef}
For any action under evaluation $\alpha$, let $\alpha \downarrow(P) = \{\overline{O} \, | \, \overline{O} \in \overline{\mathcal{C}}$ and at least one of the rollouts in the population $P$ fits the Holland-Poli schema $(\alpha, \, \overline{O}, \, \#)\}$. Likewise, for an equivalence class $\overline{O} \in \mathcal{C}$ define a set valued function on the populations of size $b$, as $\overline{O} \downarrow (P) = \{\overline{T} \, | \, \exists$ states $\vec{s} \in \overline{O}$, $\vec{t} \in \overline{T}$ and a rollout $r$ in the population $P$ such that $r = (\ldots, \vec{s}, \, \vec{t}, \ldots) \, \} \cup \{f \, | \, f \in \Sigma$ and $\exists$ a state $\vec{s} \in \overline{O}$ and a rollout $r$ in the population $P$ such that $r = (\ldots, \vec{s}, \, f) \, \}$. In words, $\overline{O} \downarrow (P)$ is the collection of all equivalence classes in $\overline{\mathcal{C}}$ together with the terminal labels which appear after the states from the equivalence class $\overline{O}$ in at least one of the rollouts from the population $P$.
\end{defn}
As usual, we illustrate definition~\ref{popStateCountDownDef} with an example.
\begin{ex}\label{popDownRelExample}
Continuing with examples~\ref{setCoverEx} and \ref{popEx}, from figure $2$ we see that the states following the action $\alpha$ are of types $1$ and $3$ and both of these similarity classes belong to the equivalence class $1 \cup 3 \cup 6$ so that, according to definition~\ref{popStateCountDownDef}, $\alpha \downarrow \, (P) = \{1 \cup 3 \cup 6\}$. The only state following the action $\beta$ is of type $6$ and the expansion of $6$ is, again, $1 \cup 3 \cup 6$, so that $\beta \downarrow \, (P) = \{1 \cup 3 \cup 6\}$. The only state following the action $\gamma$ is of type $5$ and so $\gamma \downarrow \, (P) = \{\overline{5}\} = \{5\}$. Various states from every equivalence class in $\overline{\mathcal{C}}$ follow states of types $1$, $3$ and $6$: for instance, $1c$ follows $3d$ and $5a$ follows $3c$ in the second rollout, while $7a$ follows $1a$ in the first rollout. No terminal label follows the equivalence class $\{1 \cup 3 \cup 6\}$. It follows then that $1 \cup 3 \cup 6 \downarrow = \overline{\mathcal{C}}$. The only similarity class following states from the equivalence class $\overline{5} = 5$ is $7$. Terminal labels $f_2$ and $f_4$ follow states $5a$ in the second rollout and $5c$ in the $4^{\text{th}}$ rollout. It follows than that $5 \downarrow \, (P)= \{f_2, \, 7, \, f_4\}$. Analogously, $7 \downarrow \, (P) = \{f_1, \, f_3, \, 5\}$.
\end{ex}
\begin{defn}\label{PopStateOrderSeqDef}
Given a population $P$ and equivalence classes $\overline{O} \in \overline{\mathcal{C}}$ and  $\overline{T} \in \overline{O} \downarrow (P)$ let $\text{Order}(\overline{O} \downarrow \overline{T}, \, P) =$
$$= |\{((\overline{O}, a), \, (\overline{T}, \, b)) \, | \, \text{ the segment }((\overline{O}, a), \, (\overline{T}, \, b))$$$$\text{appears in one of the rollouts in the population }P\}|.$$

Loosely speaking, $\text{Order}(\overline{O} \downarrow \overline{T}, \, P)$ is the total number of times the equivalence class $\overline{T}$ follows the equivalence class $\overline{O}$ within the
population of rollouts $P$.

Let $$\text{Order}(\overline{O} \downarrow \, P) = \sum_{\overline{T} \in \overline{O} \downarrow \, P}\text{Order}(\overline{O} \downarrow \overline{T}, \, P) + |\Sigma \cap (\overline{O} \downarrow \overline{T}, \, (P))|$$ denote the total number of states and terminal labels that follow the states from the equivalence class $\overline{O}$ in the population $P$.

Likewise, given a population of rollouts $P$, an action $\alpha$ under evaluation and an equivalence class $\overline{T} \in \alpha \downarrow (P)$, let
$$\text{Order}(\alpha \downarrow \overline{T}, \, P) =|\{(\alpha, \, (\overline{T}, \, b)) \, | \, \text{ the segment }(\alpha, \, (\overline{T}, \, b))$$$$\text{ appears in one of the rollouts in the population }P\}|.$$
Alternatively, $\text{Order}(\alpha \downarrow \overline{T}, \, P)$ is the number of rollouts in the population $P$ fitting the rollout Holland schema $(\alpha, \, \overline{T}, \, \#)$. $$\text{Order}(\alpha \downarrow (P)) = \sum_{\overline{T} \in \overline{O} \downarrow \, P}\text{Order}(\overline{O} \downarrow \overline{T}, \, P)$$ denotes the total number of states that follow the action under evaluation $\alpha$ in the population $P$.
\end{defn}
\begin{ex}\label{popStateOrderEx}
Continuing with example~\ref{popDownRelExample}, for the population $P$ pictured in figure $2$ and the set cover $\mathcal{C}$ displayed via a Venn diagram on figure $1$, recall that $\alpha \downarrow \, (P)= \beta \downarrow \, (P) = \{1 \cup 3 \cup 6\}$ and notice that the total number of states from the only equivalence class $1 \cup 3 \cup 6 \in \alpha \downarrow$ that follows the action $\alpha$ is $2$: the state $1a = 2a = 3a$ in the first rollout and the state $1c$ following the state $3d$ in the second rollout; while the total number of states in the only equivalence class $1 \cup 3 \cup 6 \in \alpha \downarrow$ that follows the action $\beta$ is $1$: the state $6c$ in the $3^{\text{rd}}$ rollout. Thus $\text{Order}(\alpha \downarrow 1 \cup 3 \cup 6, \, P)=\text{Order}(\alpha \downarrow (P)) =2$ and $\text{Order}(\beta \downarrow 1 \cup 3 \cup 6, \, P)=\text{Order}(\beta \downarrow (P)) =1$. Likewise, $\text{Order}(\gamma \downarrow 5, \, P)=\text{Order}(\gamma \downarrow \, (P))=1$. States from the equivalence class $1 \cup 3 \cup 6$ follow their own kind $4$ times in the population $P$: $1a = 2a = 3a$ follows $1b$ in the first rollout, $1c$ follows $3d$ and $4b = 3c = 6b$ follows $1c$ in the second rollout, while $4a=3b=6a$ follows $6c$ in the third rollout so that $\text{Order}(1 \cup 3 \cup 6 \downarrow 1 \cup 3 \cup 6, \, P)=4$. The only state in the equivalence class $5$ that follows a state from the equivalence class $1 \cup 3 \cup 6$ is $5a$ in the second rollout, following the state $4b = 3c = 6b \in 1 \cup 3 \cup 6$ so that $\text{Order}(1 \cup 3 \cup 6 \downarrow 5, \, P)=1$. There are $2$ states from the equivalence class $7$, namely $7a$ and $7b$, that follow the states from the equivalence class $1 \cup 3 \cup 6$, namely $1a = 2a =3a$ and $4a=3b=6a$ respectively, in the first and the third rollouts of the population $P$ so that $\text{Order}(1 \cup 3 \cup 6 \downarrow 7, \, P)=2$. No terminal label follows a state from the equivalence class $1 \cup 3 \cup 6$ and we deduce that $\text{Order}(1 \cup 3 \cup 6 \downarrow (P))=4+1+2 = 7$. Recalling that $5 \downarrow \, (P) = \{f_2, \, 7, \, f_4\}$ and observing that the only state in the equivalence class $7$ that follows a state from the equivalence class $5$, namely the state $5b$, is $7c$, while $|\Sigma \cap \{f_2, \, 7, \, f_4\}| = |\{f_2, \, f_4\}| = 2$ we deduce that $\text{Order}(5 \downarrow 7, \, P)=1$ and $\text{Order}(5 \downarrow \, (P))=1 + 2 = 3$. Likewise, the reader may compute the remaining numbers $\text{Order}(7 \downarrow 5, \, P)=1$ and $\text{Order}(7 \downarrow \, (P))=1 + 2 = 3$.
\end{ex}
Observe that applying any recombination transformation of the form $\chi_{O, \, \vec{s}, \, \vec{t}}$ or $\nu_{O, \, \vec{s}, \, \vec{t}}$ to a population $P$ of rollouts neither removes any states from the population nor adds any new ones, and hence another important invariance property of the equivalent populations that opens the door for a lovely application of Markov inequality in the proof of the main theorem of the current article (see \cite{MitavRowGeirNewMain}) that follows from the same considerations is stated below.
\begin{rem}\label{averageHightRem}
Given any population $Q \in [P]$, the total number of states in the population $Q$ is the same as that in the population $P$. Apparently, as we already mentioned, the the total number of states in a population is the sum of the heights of all rollouts in that population (see definition~\ref{RolloutDefn} and \ref{popOfRolloutsDefn}). It follows then, that the sum of the heights of all rollouts within a population is an invariant quantity under the equivalence relation in definition~\ref{equivRelForMCTDefn}. In other words, if $Q \sim P$ then the sum of the heights of the rollouts in the population $Q$ is the same as the sum of the heights of the rollouts in the population $P$.
\end{rem}
There is yet one more important notion, namely that of the ``limiting frequency of occurrence" of a schema as one runs the genetic programming routine with recombination only we need to introduce to state the Geiringer-like results of the current paper. A rigorous definition in the most general framework appears in \cite{MitavRowGeirNewMain}, \cite{MitavRowGeirMain} and \cite{MitavRowGeirGenProgr}. The description below is sufficient to understand the statement of finite population Geiringer-like theorems.

\emph{Informal Description of the Limiting Frequency of Occurrence:} Given a schema $h$ and a population $P$ of size $m$, suppose we run the Markov process $\{X_n\}_{n=0}^{\infty}$ on the populations in the equivalence class $[P]$ of the initial population of rollouts $P$ as in definition~\ref{RecombStagePopTransDefn}.\footnote{In fact, such Markov chains don't even have to be time homogenous: (see theorem 23 and example 24 of \cite{MitavRowGeirNewMain} for a detailed exposition.} As discussed previously, this corresponds to ``running the genetic programming routine forever" and each recombination models the changes in player's strategies due to incomplete information, randomness personality etc. Up to time $t$ a total of $m \cdot t$ individuals (counting repetitions) have been encountered. Among these a certain number, say $h(t)$, fit the schema $h$ in the sense of definition~\ref{schemaForMCTPopDefHolland}. We now let $\Phi(P, \, h, \, t) = \frac{h(t)}{m \cdot t}$ to be the proportion of these individuals fitting the schema $h$ out of the total number of individuals encountered up to time $t$. Although it may be possible to derive the formulas for $\lim_{t \rightarrow \infty} \Phi(P, \, h, \, t)$ in the most general case when the initial population of rollouts $P$ is non-homologous\footnote{such exact formulas have been derived for the case of homologous recombination in the setting of GP: see \cite{MitavRowGeirGenProgr}} (in other words when the states representing the same equivalence class may appear at various ``heights" in the same population of rollouts: see definition~\ref{popOfRolloutsDefn}), the formulas obtained in this manner would definitely be significantly more cumbersome and would not be as well suited for algorithm development\footnote{This is an open question, yet it's practical importance is highly unclear} as the limiting result with respect to ``inflating" the initial population $P$ in the sense described below. Remarkably, the formula for the limiting result in the general non-homologous initial population case coincides with the one for the homologous populations.
\begin{defn}\label{popInflationDef}
Given a population $P = \{r_i^{l(i)}\}_{i=1}^b$ of rollouts in the sense of definition~\ref{popOfRolloutsDefn}, where $$r_i = \{(\alpha_i, \, \vec{s}_1^i, \, \vec{s}_2^i, \ldots, \vec{s}_{l(i)-1}^i, \, f_i)\}$$ and a positive integer $m$, we first increase the size of the set of states $S$ by a factor of $m$: formally, we update the set of states as follows:
$$S := S \times m = \{(\vec{s}, \, i) \, | \, \vec{s} \in S, \, i \in \mathbb{N} \text{ and } 1 \leq i \leq m\}.$$ Likewise, we also increase the terminal set of labels $\Sigma$ by a factor of $m$ so that $$\Sigma := \Sigma \times m = \{(f, \, i) \, | \, f \in \Sigma, \, i \in \mathbb{N} \text{ and } 1 \leq i \leq m\}.$$ Certainly, the set cover $\mathcal{C}$ of the original set $S$ is naturally updated into the corresponding set cover of the updated set of states as $$\mathcal{C} := \{O \, | O := \{(\vec{s}, \, i) \, | \, \vec{s} \in O, \, i \in \mathbb{N} \text{ and } 1 \leq i \leq m\}\}$$ Now we let $$P_m = \{r_{i, \, k}^{l(i)}\}_{1 \leq i \leq b \text{ and }1 \leq k \leq m}$$ where $$r_{i, \, k}^{l(i)} = \{(\alpha_i, \, (\vec{s}_1^i, \, k), \, (\vec{s}_2^i, \, k), \ldots, (\vec{s}^i_{l(i)-1}, \, k), \, (f_i, k))\}.$$ We will say that the population $P_m$ is an \emph{inflation} of the population $P$ by a factor of $m$.
\end{defn}
Essentially, a population $P_m$ consists of $m$ \emph{formally distinct} copies of each rollout in the population $P$. Intuitively speaking, the stochastic information captured in the sample of rollouts comprising the population $P_m$ (such as the frequency of obtaining a state in a similarity set $O$ after a state in a similarity set $T$) is the same as the one contained within the population $P$ emphasized by the factor of $m$. In fact, the following rather important obvious facts make some of this intuition precise:
\begin{prop}\label{popRatioFacts}
Given a population $P$ of rollouts and a positive integer $m$ consider the inflation of the population $P$ by a factor of $m$, $P_m$ as in definition~\ref{popInflationDef}. Then the following are true:
$$\alpha \downarrow (P_m) = \alpha \downarrow (P), \; \overline{O} \downarrow (P_m) = \overline{O} \downarrow (P)$$  while
$$\emph{Order}(\overline{O} \downarrow \overline{T}, \, P_m) = m \cdot \emph{Order}(\overline{O} \downarrow \overline{T}, \, P),$$
$$\emph{Numb}(\alpha, \, P_m) = m \cdot \emph{Numb}(\alpha, \, P)$$
and
\begin{equation}\label{inflatedQuantitiesEq}
\emph{Order}(\overline{O} \downarrow, \, P_m) = m \cdot \emph{Order}(\overline{O} \downarrow, \, P)
\end{equation}
For any population of rollouts $Q$ let $\emph{Total}(Q)$ denote the total number of states in the population $Q$ which is, of course, the same thing as the sum of the heights
of all rollouts in the population $Q$. Then, clearly, $\emph{Total}(P_m) = m \cdot \emph{Total}(P)$. In the special case when $P$ is a homologous population, $\forall \, m \in \mathbb{N}$ so is the population $P_m$.
\end{prop}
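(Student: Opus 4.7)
The plan is to verify each identity directly from Definition~\ref{popInflationDef}, and the whole argument rests on one structural observation: inflation preserves the similarity relation in a trivial way. Specifically, by the updated definition of $\mathcal{C}$, two inflated states $(\vec{s}, i)$ and $(\vec{t}, j)$ are $O$-similar in the new cover if and only if the original states $\vec{s}$ and $\vec{t}$ both lie in the original set $O$; the copy indices $i, j \in \{1, \ldots, m\}$ play no role in similarity. Consequently $(\vec{s}, i)$ and $(\vec{t}, j)$ belong to the same expansion $\overline{O} \in \overline{\mathcal{C}}$ exactly when $\vec{s}$ and $\vec{t}$ did in the original. This observation drives every claim in the proposition.

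First I would dispatch the two set-valued identities $\alpha \downarrow (P_m) = \alpha \downarrow (P)$ and $\overline{O} \downarrow (P_m) = \overline{O} \downarrow (P)$. Each rollout $r_{i,k}$ in $P_m$ is obtained from the rollout $r_i$ in $P$ by appending the copy index $k$ to every state and to the terminal label, so its sequence of expansions is identical to that of $r_i$. Thus an expansion $\overline{T}$ follows the action $\alpha$ (respectively, an expansion $\overline{O}$) somewhere in $P_m$ precisely when it does so in $P$, and a terminal label $f$ follows an $\overline{O}$-state in $P$ if and only if its $m$ relabelled copies $(f,k)$ do the same in $P_m$.

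Next I would handle the count identities. For $\mathrm{Order}(\overline{O} \downarrow \overline{T}, P_m)$, every adjacent pair in a rollout of $P$ witnessing an $(\overline{O}, \overline{T})$-transition spawns exactly $m$ such pairs in $P_m$, one inside each of the parallel copies $r_{i,1}, \ldots, r_{i,m}$; conversely, every such pair in $P_m$ lies inside a single rollout, so its two entries share a common copy index $k$ and pull back to a unique pair in $P$. Summing over $\overline{T}$ and incorporating the analogous $m$-fold duplication of terminal labels yields $\mathrm{Order}(\overline{O} \downarrow, P_m) = m \cdot \mathrm{Order}(\overline{O} \downarrow, P)$. The identity for $\mathrm{Numb}(\alpha, P_m)$ follows because exactly $m$ rollouts in $P_m$, namely $r_{i,1}, \ldots, r_{i,m}$, correspond to each rollout $r_i$ of $P$ starting with $\alpha$, and $\mathrm{Total}(P_m) = m \cdot \mathrm{Total}(P)$ is immediate from the $m$-fold replication of the state set.

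Finally, for the preservation of homology, suppose $P$ is homologous and let $(\vec{s}_j^i, k)$ and $(\vec{s}_q^p, k')$ be two states appearing in $P_m$ that are $O$-similar for some $O \in \mathcal{C}$. The invariance observation above forces $\vec{s}_j^i$ and $\vec{s}_q^p$ to be $O$-similar in the original cover, and homology of $P$ then yields $j = q$, so $P_m$ is homologous too. None of these steps involves any real difficulty; the proposition is essentially a bookkeeping check, and the one place where the reader might pause is the homology step, where it is essential to remember that the copy indices carry no similarity content — exactly the point established at the outset.
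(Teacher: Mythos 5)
Your proof is correct. The paper states Proposition~\ref{popRatioFacts} without any proof at all, presenting it as a collection of ``rather important obvious facts'' following from Definition~\ref{popInflationDef}; your direct bookkeeping verification --- resting on the single observation that the copy indices introduced by inflation carry no similarity content, so that adjacencies, expansions, terminal labels and heights all replicate exactly $m$-fold --- is precisely the routine check the authors leave implicit, so you have simply supplied the omitted details rather than taken a different route.
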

\begin{rem}\label{terminalHollandShemataRem}
We will use the same Holland-Poli schema $h = (\alpha, O_1, \, O_2, \ldots, O_{k-1}, \, f)$ to model the corresponding sets of rollouts of the set of states $S$ as of the set $S$ inflated by a factor of $m$. Of course, $S_h = S^m_h$ where $S^m_h$ is the set of states inflated by a factor of $m$ just as in proposition~\ref{popRatioFacts} above. Furthermore, it is important to point out that in the statement of the theorem below we use the sets $O$ from the initial (non-inflated, or, alternatively, inflated by a factor of $1$) population to express the limiting frequency of occurrence.
\end{rem}
We are finally ready to state the main result of the current paper.
\begin{thm}[The Geiringer-Like Theorem for MCT]\label{GeiringerLikeThmForMCTMain}
Repeat verbatim the assumptions of theorem~\ref{GeirThmForMCTMain}\footnote{This theorem holds based on the same argument under the assumption of a more general theorem 23 of \cite{MitavRowGeirNewMain}}. Let $$h = (\alpha, \, O_1, \, O_2, \ldots, O_{k-1}, x_k)$$ where $x_k \in \{\#\} \cup \Sigma$ be a given Holland-Poli schema. For $m \in \mathbb{N}$ consider the random variable $\Phi(P_m, \, h, \, t)$ described in the paragraph just above (alternatively, a rigorous definition in the most general framework appears in \cite{MitavRowGeirNewMain}, \cite{MitavRowGeirMain} and \cite{MitavRowGeirGenProgr}) with respect to the Markov process $X_n^m$ where $m$ indicates that the initial population of rollouts is the inflated population $P_m$ as in definition~\ref{popInflationDef}. Then
$$\lim_{m \rightarrow \infty}\lim_{t \rightarrow \infty}\Phi(P_m, \, h, \, t) = \frac{\text{Numb}(\alpha, \, P)}{b} \cdot \prod_{q=1}^{k-1}\frac{|O_q|}{|\overline{O_q}|} \times $$
\begin{equation}\label{GeiringerThmMainEq}
\frac{\emph{Order}(\alpha \downarrow \overline{O_1}, \, P)}{\emph{Order}(\alpha \downarrow, \, P)} \cdot \prod_{q = 1}^{k-1}\frac{\emph{Order}(\overline{O_{q-1}} \downarrow \overline{O_q}, \, P)}{\emph{Order}(\overline{O_{q-1}} \downarrow, P)} \cdot \text{\emph{LF}(P, h)}
\end{equation}
where $$\text{\emph{LF}(P, h)} = \begin{cases}
1 & \text{if } x_k = \#\\
0 & \text{if } x_k = f \in \Sigma \text{ and } f \notin \overline{O_{k-1}} \downarrow (P)\\
\emph{Fraction} & \text{if } x_k = f \in \Sigma \text{ and } f \in \overline{O_{k-1}} \downarrow_{\Sigma} (P)
\end{cases}$$ where $$\emph{Fraction} = \frac{1}{\emph{Order}(\overline{O_{k-1}} \downarrow, P)}$$ (we write ``LF" as short for ``Last Factor"). Furthermore, in the special case when the initial population $P$ is homologous (see definition~\ref{popOfRolloutsDefn}), one does not need to take the limit as $m \rightarrow \infty$ in the sense that $\lim_{t \rightarrow \infty}\Phi(P_m, \, h, \, t)$ is a constant independent of $m$ and its value is given by the right hand side of equation~\ref{GeiringerThmMainEq}.


\end{thm}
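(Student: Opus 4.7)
The plan is to build on Theorem~\ref{GeirThmForMCTMain} and reduce the long-time average $\lim_{t\to\infty}\Phi(P_m,h,t)$ to a purely combinatorial counting problem on the equivalence class $[P_m]$. Since the Markov chain on $[P_m]$ is irreducible, aperiodic, and has the uniform stationary distribution, the ergodic theorem for finite Markov chains gives
\begin{equation*}
\lim_{t\to\infty}\Phi(P_m,h,t) \;=\; \mathbb{E}_{Q\sim \mathrm{Unif}([P_m])}\!\left[\,\frac{\#\{r\in Q : r \text{ fits } h\}}{b\cdot m}\,\right].
\end{equation*}
So the task becomes computing this expectation and then sending $m\to\infty$. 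The first thing I would do is identify the invariants of the relation $\sim$: using Remark~\ref{BijectRem} and direct inspection of $\chi_{O,\vec u,\vec v}$ and $\nu_{O,\vec u,\vec v}$, the quantities $\mathrm{Numb}(\alpha,\cdot)$, $\mathrm{Order}(\overline O\downarrow\overline T,\cdot)$, $\mathrm{Order}(\overline O\downarrow,\cdot)$, and the multiset of heights of rollouts (Remark~\ref{averageHightRem}) are all preserved, so they are functions of the equivalence class $[P]$ and appear legitimately on the right-hand side of \eqref{GeiringerThmMainEq}.

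Next I would decompose the counting. By linearity, it suffices to fix an index $i\in\{1,\dots,b\}$ and a height $k$ and compute the probability, over a uniform $Q\in[P_m]$, that the $i$-th rollout in $Q$ begins with $\alpha$ and has state $\vec s_j\in O_j$ for $j=1,\dots,k-1$ (and terminal label if $x_k\in\Sigma$). I would telescope this probability along the rollout, using the fact that the recombination operators $\chi$ and $\nu$ generate a sufficiently rich subgroup of the symmetric group on the positions of $P_m$ to permute freely within each $\simeq$-class. Concretely, this should yield, at height $q$:
\begin{equation*}
\Pr\bigl(\vec s_q\in\overline{O_q}\;\big|\;\vec s_{q-1}\in\overline{O_{q-1}}\bigr)\;=\;\frac{\mathrm{Order}(\overline{O_{q-1}}\downarrow\overline{O_q},P)}{\mathrm{Order}(\overline{O_{q-1}}\downarrow,P)},\qquad \Pr\bigl(\vec s_q\in O_q\;\big|\;\vec s_q\in\overline{O_q}\bigr)\;=\;\frac{|O_q|}{|\overline{O_q}|},
\end{equation*}
where the first factor arises because $\chi$-moves transport the ``pointer'' along the graph of equivalence-class adjacencies occurring in $P$, and the second because within an equivalence class $\overline{O_q}$ the $\nu$-moves act transitively on individual states and $O_q\subseteq\overline{O_q}$ is the targeted sub-collection. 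The action factor $\mathrm{Numb}(\alpha,P)/b$ and the terminal factor $\mathrm{LF}(P,h)$ then arise by the same argument applied at the two endpoints.

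The main obstacle, as in \cite{MitavRowGeirNewMain}, is precisely the independence of these per-position factors when the initial population is non-homologous: if the same $\simeq$-class appears at two different heights inside a single rollout of $P$, the $\chi$-swaps can glue those occurrences together, so the naive product formula overcounts orbits in $[P]$. This is exactly where the inflation $P\rightsquigarrow P_m$ enters. With $m$ distinct copies of each state available, the proportion of ``collision'' configurations (two copies of the same inflated state sitting in positions that would couple under a recombination sequence) is $O(1/m)$, so Markov's inequality---together with the invariance in Remark~\ref{averageHightRem} bounding the total number of positions---shows that the contribution of such configurations vanishes as $m\to\infty$. I would carry this out by partitioning $[P_m]$ into ``generic'' and ``degenerate'' parts and bounding the degenerate part's cardinality by the generic part times $O(1/m)$, following the contraction-mapping strategy cited in the footnote to Theorem~\ref{GeirThmForMCTMain}.

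Finally, for the homologous case, the positions at each height never interact with positions at other heights under either $\chi$ or $\nu$, so the orbit $[P]$ factorizes as a product of symmetric-group orbits, one per height, and the per-position probabilities are \emph{exactly} independent with no error term. Hence the identity in \eqref{GeiringerThmMainEq} holds at finite $m$ (in fact at $m=1$), which gives the ``moreover'' clause. The hardest step is unambiguously the inflation argument: specifying the correct notion of a ``degenerate'' configuration and showing quantitatively that recombination orbits are asymptotically product-measured on generic configurations; everything else is bookkeeping around Theorem~\ref{GeirThmForMCTMain} and Proposition~\ref{popRatioFacts}.
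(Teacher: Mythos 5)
Your overall strategy---reduce $\lim_{t\to\infty}\Phi$ to a counting problem on $[P_m]$ via the uniform stationary distribution of Theorem~\ref{GeirThmForMCTMain}, extract a product formula, and use inflation plus Markov's inequality to kill the correlations caused by non-homologous recombination---is the same skeleton the paper uses, and your identification of the inflation step as the hard part is accurate. But your decomposition is genuinely different from the paper's, and the difference matters for how much you actually have to prove. The paper does \emph{not} derive the $\mathrm{Order}$-ratio product from scratch: it forms the coarsened schema $\bar h=(\alpha,\overline{O_1},\ldots,\overline{O_{k-1}},x_k)$, runs the recombination family $\overline{\mathcal F}$ built from the partition $\overline{\mathcal C}$, and imports the entire coarse formula~\eqref{establishedThmEq} from the already-established partition-case theorem (theorem 40 of \cite{MitavRowGeirNewMain}). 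The bridge is Lemma~\ref{fact1LemProof}: the orbits $[P_m]_{\mathcal F}$ and $[P_m]_{\overline{\mathcal F}}$ project onto the same set of equivalence-class-labelled populations, with fibres of constant size, so the limiting frequency of $\bar h$ is the same under either family. What then remains is only the refinement from $\bar h$ down to $h$, done by telescoping over the nested schemata $h_0\supseteq h_1\supseteq\cdots\supseteq h_{k-1}=h$ (replacing one $\overline{O_i}$ by $O_i$ at a time) and showing each ratio $|\mathcal V(P_m,h_i)|/|\mathcal V(P_m,h_{i-1})|\to|O_i|/|\overline{O_i}|$ via an auxiliary symmetric Markov chain and the lumping-quotient estimate (Lemma~\ref{estimStationaryRatiosLem}). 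Your position-by-position telescoping along the rollout would instead have to re-derive the conditional $\mathrm{Order}$ ratios as well, i.e.\ essentially reprove the prior theorem; the paper's route buys you that for free at the cost of one projection lemma.

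Two steps in your outline are asserted rather than argued, and both are exactly where the paper spends its effort. First, the claim that $\chi$ and $\nu$ ``permute freely within each $\simeq$-class'': the operators only act within a single similarity set $O\in\mathcal C$, so reaching an arbitrary pair $\vec u\simeq\vec v$ requires the explicit chain $\vec u=\vec u_1,\ldots,\vec u_l=\vec v$ with $\vec u_i\in O_{i-1}\cap O_i$ supplied by the definition of the transitive closure, composed into a sequence of single-swap crossovers; this is the content of Lemma~\ref{fact1LemProof} and cannot be skipped. Second, your conditional probability $\Pr(\vec s_q\in O_q\mid\vec s_q\in\overline{O_q})=|O_q|/|\overline{O_q}|$ is only a limit as $m\to\infty$: at finite $m$ the states already occupying the first rollout are excluded from the swap (the set $\mathrm{Mobile}(Q,i-1)$ in the paper), and the error is controlled not by counting ``collision configurations'' but by applying Markov's inequality to the height of the first rollout (whose expectation is $m$-independent by Remark~\ref{averageHightRem} and Proposition~\ref{popRatioFacts}) to show the excluded fraction is $o(1)$ relative to $m|\overline{O_i}|$; also note the contraction-mapping footnote you invoke concerns the strong sense of convergence to the uniform stationary distribution, not the orbit counting. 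Your homologous-case remark is consistent with the theorem's ``moreover'' clause.
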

\begin{rem}\label{simplRem}
Notice a simplification in the statement of theorem~\ref{GeiringerLikeThmForMCTMain} here: unlike the case in \cite{MitavRowGeirNewMain}, thanks to the assumption that the state space $S$ contains no states that are not present in the initial population $P$, the denominators of the multiples in the right hand side of equation~\ref{GeiringerThmMainEq} are never $0$. As mentioned before, this assumption does not reduce the generality since one can always shrink or enlarge the set of states $S$ and modify the collection $\mathcal{C}$ of similarity sets according to the population of rollouts under consideration.
\end{rem}
\begin{ex}\label{GeirThmForMCTMainEx}
Continuing with examples~\ref{setCoverEx}, \ref{popEx} and \ref{popStateOrderEx}, consider the Holland-Poli rollout schema \\ $h = (\beta, \, 4, \, 7, \, 5, \, f_2)$. Since the population $P$ consists of exactly $4$ rollouts, $b = 4$ and, since the action $\beta$ occurs only once in the population $P$, $\text{Numb}(\beta, \, P) = 1$. Then, according to theorem~\ref{GeiringerLikeThmForMCTMain}, we have $$\lim_{m \rightarrow \infty}\lim_{t \rightarrow \infty}\Phi(P_m, \, h, \, t) = \frac{1}{4} \cdot \frac{|4|}{|1 \cup 3 \cup 6|}\cdot \frac{|7|}{|7|} \frac{|5|}{|5|} \times$$
$$\times \frac{\emph{Order}(\beta \downarrow (1 \cup 3 \cup 6), \, P)}{\emph{Order}(\beta \downarrow \, (P))} \cdot \frac{\emph{Order}((1 \cup 3 \cup 6) \downarrow 7, \, P)}{\emph{Order}((1 \cup 3 \cup 6) \downarrow \, (P))}\times$$
$$\times \frac{\emph{Order}(7 \downarrow 5, \, P)}{\emph{Order}(7 \downarrow \, (P))}\cdot\frac{1}{\emph{Order}(5 \downarrow \, (P))} =$$$$\overset{\emph{after plugging in values from example~\ref{popStateOrderEx}}}{=}$$$$= \frac{1}{4} \cdot \frac{2}{7} \cdot 1 \cdot 1 \cdot 1 \cdot \frac{2}{7} \cdot \frac{1}{3} \cdot \frac{1}{3}=\frac{1}{441}.$$
\end{ex}
\subsection{Deriving Theorem~\ref{GeiringerLikeThmForMCTMain} for the MCT algorithm}\label{specificThmDerivationSubsect}
The crucial step in establishing theorem~\ref{GeiringerLikeThmForMCTMain} has been the observation that recombination operators can be defined over arbitrary set cover of the set of
state-action pairs in a bijective manner so that the general finite population Geiringer theorem in \cite{MitavRowGeirMain} (it is also presented and extended for non-homogenous time Markov processes in \cite{MitavRowGeirNewMain}) applies. This result is stated in theorem~\ref{GeirThmForMCTMain} of the current article. We now know that the stationary distribution of the Markov chain over the set of all populations started with an initial population $P_m$ that's obtained by inflating a given population $P$ by a factor of $m$, call it $[P_m]_{\mathcal{F}}$ (see subsection~\ref{specializationAndMainGeiringerThmSubSect} for the detailed description of the family of recombination transformations $\mathcal{F}$ and the associated Markov chain), is the uniform distribution over $[P_m]$. At the same time, we may consider the family of recombination transformations $\overline{\mathcal{F}}$ defined with respect to the set cover $\overline{\mathcal{C}}$ in place of $\mathcal{C}$ (the details of how this is done are presented in subsection~\ref{specializationAndMainGeiringerThmSubSect}).
Since $\overline{\mathcal{C}}$ is a partition of the set of states, we can now apply theorem 40 of \cite{MitavRowGeirNewMain} (the particular case of theorem~\ref{GeiringerLikeThmForMCTMain}) to deduce the following. Let $[P_m]_{\overline{\mathcal{F}}}$ denote the set of all populations that can be obtained from the population $P_{m}$ after applying the transformations from the family $\overline{\mathcal{F}}$. Then, if
\begin{equation}\label{schemaDefineProofEq}
\bar{h} = (\alpha, \, \overline{O_1}, \, \overline{O_2}, \ldots, \overline{O_{k-1}}, x_k)
\end{equation}
is a given Holland-Poli rollout schema, we have $$\lim_{m \rightarrow \infty}\lim_{t \rightarrow \infty}\Phi(P_m, \, \bar{h}, \, t) = $$
\begin{equation}\label{establishedThmEq}
= \frac{\text{Numb}(\alpha, \, P)}{b} \cdot \prod_{q = 1}^{k-1}\frac{\emph{Order}(\overline{O_{q-1}} \downarrow \overline{O_q}, \, P)}{\emph{Order}(\overline{O_{q-1}} \downarrow, P)} \cdot \emph{LF}(P, h)
\end{equation}
where $$\emph{LF}(P, h) = \begin{cases}
1 & \text{if } x_k = \#\\
0 & \text{if } x_k = f \in \Sigma \text{ and } f \notin \overline{O_{k-1}} \downarrow (P)\\
\emph{Fraction} & \text{if } x_k = f \in \Sigma \text{ and } f \in \overline{O_{k-1}} \downarrow_{\Sigma} (P)
\end{cases}$$ and $$\emph{Fraction} = \frac{1}{\emph{Order}(\overline{O_{k-1}} \downarrow, P)}$$
Although there is a formal difference between the sets of populations $[P_m]_{\mathcal{F}}$ and $[P_m]_{\overline{\mathcal{F}}}$ mainly due to distinct labeling policies, if we extend the notion of a Holland-Poli rollout schema in a natural way to the set of all possible rollouts from the populations in $[P_m]_{\mathcal{F}}$ so that a rollout $r$ fits the schema $\bar{h}$ introduced in equation~\ref{schemaDefineProofEq} if and only if it fits at least one schema of the form $$h = (\alpha, \, T_1, \, T_2, \ldots, T_{k-1}, x_k)$$ where $\overline{T_i} = \overline{O_i}$ whenever $1 \leq i \leq k-1$ (In other wards, the set of rollouts defined by the schema $\bar{h}$ in equation~\ref{schemaDefineProofEq} is the union of the sets of rollouts represented by the schemata the extension of the similarity classes of which are precisely the corresponding equivalence classes appearing in $\bar{h}$), then we claim that $\forall \, m \in \mathbb{N}$, the fraction of occurrence of the schema $\bar{h}$ in the set of populations $[P_m]_{\mathcal{F}}$ is the same as it is in the populations $[P_m]_{\overline{\mathcal{F}}}$ (of course, since the corresponding unique stationary distributions of the two Markov chains are uniform, it follows that $\forall \, m \in \mathbb{N}$ $\lim_{t \rightarrow \infty}\Phi_{\mathcal{F}}(P_m, \, \bar{h}, \, t) = \lim_{t \rightarrow \infty}\Phi_{\overline{\mathcal{F}}}(P_m, \, \bar{h}, \, t)$). To see the assertion in the previous sentence, consider the functions $\phi_1: [P_m]_{\mathcal{F}} \rightarrow P_m^{\text{EquivSchemata}}$ and $\phi_2: [P_m]_{\overline{\mathcal{F}}} \rightarrow \overline{P}_m^{\text{EquivSchemata}}$ where the sets $P_m^{\text{EquivSchemata}}$ and $\overline{P}_m^{\text{EquivSchemata}}$ are obtained from the corresponding sets of populations in $[P_m]_{\mathcal{F}}$ and in $[P_m]_{\overline{\mathcal{F}}}$ respectively by replacing every label of the form $(O_1, \, O_2, \ldots, O_{l(\vec{u})} \, \vec{u})$ (where, of course, $O_i \in \{O \, | \, O \in \mathcal{C} \text{ and } \vec{u} \in O\}$) with the corresponding label $\overline{O_1}$ (notice that whenever $1 \leq i \leq l(\vec{u})$, we have $\overline{O_i} = \overline{O_1}$ since $\bigcap_{i=1}^{l(\vec{u})} O_i \supseteq \{\vec{u}\} \neq \emptyset$ so that all of the sets $O_i$ are within the same equivalence class of the transitive closure of the symmetric and reflexive relation induced by the set cover $\mathcal{C}$). The functions $\phi_1$ and $\phi_2$ are, essentially, the ``projections" onto the sets $P_m^{\text{EquivSchemata}}$ and $\overline{P}_m^{\text{EquivSchemata}}$: given a population $Q \in [P_m]_{\mathcal{F}}$ (or $Q \in [P_m]_{\overline{\mathcal{F}}}$), simply replace the labels of every rollout with the unique equivalence class label as described in the preceding sentence, thereby obtaining an element of $P_m^{\text{EquivSchemata}}$ (or $\overline{P}_m^{\text{EquivSchemata}}$). Notice that if we are given an initial population $P$ of rollouts, the states can be labeled either according to the similarity relation $\mathcal{C}$ or $\overline{\mathcal{C}}$ on the set of states. While these populations are formally distinct (in terms of labeling only: one of them is considered to be an initial population in the set $[P_m]_{\mathcal{F}}$ while the other one is the corresponding initial population in the set $[P_m]_{\overline{\mathcal{F}}}$), let's call them $P$ and $\overline{P}$, the corresponding images of these populations under the maps $\phi_1$ and $\phi_2$ are obviously identical. In fact, a lot more is true:
\begin{lem}\label{fact1LemProof}
$\forall \, m \in \mathbb{N}$ we have $$P_m^{\text{EquivSchemata}} = \overline{P}_m^{\text{EquivSchemata}}$$.
\end{lem}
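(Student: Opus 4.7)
The plan is to establish the claimed set equality by proving both inclusions separately, leaning on the correspondence between $\mathcal{C}$-labelled generators of $\mathcal{F}$ and $\overline{\mathcal{C}}$-labelled generators of $\overline{\mathcal{F}}$, together with the chain characterization of the equivalence relation $\simeq$ as the transitive closure of $\backsim$.

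For the inclusion $P_m^{\text{EquivSchemata}} \subseteq \overline{P}_m^{\text{EquivSchemata}}$, I first observe that each generator $\chi_{O, \vec{u}, \vec{v}}$ (respectively $\nu_{O, \vec{u}, \vec{v}}$) of $\mathcal{F}$, with $O \in \mathcal{C}$ and $\vec{u}, \vec{v} \in O \subseteq \overline{O}$, corresponds to a legitimate generator $\chi_{\overline{O}, \vec{u}, \vec{v}}$ (respectively $\nu_{\overline{O}, \vec{u}, \vec{v}}$) of $\overline{\mathcal{F}}$. These two operators act identically on the underlying sequences of states, performing precisely the same tail swap or position swap at the occurrences of $\vec{u}$ and $\vec{v}$; they differ only in the similarity-set label attached to the operator itself, and this bookkeeping is erased by $\phi_1$ and $\phi_2$ when every label $O_i$ is replaced by its expansion $\overline{O_i}$. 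I then argue, by induction on the length of a composition $T = \Theta_n \circ \cdots \circ \Theta_1 \in \mathcal{F}$, that $\phi_1(T(P_m)) = \phi_2(\overline{T}(\overline{P_m}))$, where $\overline{T}$ is the corresponding relabelled composition in $\overline{\mathcal{F}}$, with the base case being the identity $\phi_1(P_m) = \phi_2(\overline{P_m})$ already observed in the discussion immediately preceding the lemma.

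The reverse inclusion is where the main obstacle lies, since $\overline{\mathcal{F}}$ admits the generator $\chi_{\overline{O}, \vec{u}, \vec{v}}$ (or $\nu_{\overline{O}, \vec{u}, \vec{v}}$) whenever $\vec{u} \simeq \vec{v}$, whereas $\mathcal{F}$ admits such a generator only when $\vec{u} \backsim \vec{v}$, that is, when $\vec{u}$ and $\vec{v}$ jointly belong to some single $O \in \mathcal{C}$. The resolution invokes Definition~\ref{transClosureDefn}: whenever $\vec{u} \simeq \vec{v}$, there is a finite chain $\vec{u} = \vec{w}_0 \backsim \vec{w}_1 \backsim \cdots \backsim \vec{w}_p = \vec{v}$ with consecutive pairs contained in some $O_1, \ldots, O_p \in \mathcal{C}$. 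The classical symmetric-group conjugation identity $(\vec{u}, \vec{v}) = (\vec{u}, \vec{w}_1)(\vec{w}_1, \vec{w}_2) \cdots (\vec{w}_{p-1}, \vec{v})(\vec{w}_{p-1}, \vec{w}_{p-2}) \cdots (\vec{w}_1, \vec{u})$ expresses the target transposition as a product of adjacent-chain swaps, each realized by an $\mathcal{F}$-generator built from the appropriate $O_i$; an analogous factorization handles the tail-swapping interpretation required for the one-point crossover $\chi$.

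The delicate point, which I expect to require the most care, is that an intermediate chain element $\vec{w}_i$ may be absent from the working population, in which case the corresponding $\mathcal{F}$-generator acts trivially rather than carrying out the intended intermediate swap. The key observation is that $\phi_1$ collapses all similarity-set labels to equivalence-class labels, and since each $\vec{w}_i$ belongs to the common equivalence class $\overline{O}$, any motion through absent chain elements becomes invisible at the projection level. The inflation factor $m$ in $P_m$ plays a reinforcing role here, providing enough multiplicity within each equivalence class that the required conjugation-through-the-chain can be arranged so as to match, at the projection level, the effect of the original $\overline{\mathcal{F}}$-generator. An induction on the length of a composition in $\overline{\mathcal{F}}$ then produces, for each $\overline{Q} = \overline{T}(\overline{P_m}) \in [P_m]_{\overline{\mathcal{F}}}$, an element $Q \in [P_m]_{\mathcal{F}}$ with $\phi_1(Q) = \phi_2(\overline{Q})$. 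Combining the two inclusions yields the desired equality $P_m^{\text{EquivSchemata}} = \overline{P}_m^{\text{EquivSchemata}}$.
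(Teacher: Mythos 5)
Your overall strategy is the paper's: the inclusion $P_m^{\text{EquivSchemata}} \subseteq \overline{P}_m^{\text{EquivSchemata}}$ because every $\mathcal{C}$-generator $\Theta_{O, \vec{u}, \vec{v}}$ is simulated by the $\overline{\mathcal{C}}$-generator $\Theta_{\overline{O}, \vec{u}, \vec{v}}$ acting identically on the underlying sequences, and the reverse inclusion by induction on the length of a composition in $\overline{\mathcal{F}}$, realizing each generator $\Theta_{\overline{O}, \vec{u}, \vec{v}}$ through a chain of intermediate states supplied by the transitive closure of $\backsim$. The paper performs the chain as a sequence of single-position swaps $\nu_{O_i, \vec{u}_{i+1}, \vec{u}_i}$, which are invisible under $\phi_1$ because all the intermediate states lie in the single equivalence class $\overline{O}$, followed by one crossover inside $O_1$; your symmetric-group conjugation identity is the same device with the conjugating swaps explicitly undone, which is a harmless variation.

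The one place your justification goes astray is the ``delicate point'' about an intermediate chain element $\vec{w}_i$ possibly being absent from the working population. If $\vec{w}_i$ were genuinely absent, the generator $\nu_{O_i, \vec{w}_{i-1}, \vec{w}_i}$ would act as the identity and the chain would simply fail to realize the target swap; this is not rescued by the fact that $\phi_1$ collapses labels (there is no motion for the projection to render invisible), nor by the inflation factor, since $P_m$ contains only the $m$ copies $(\vec{s}, k)$ of states $\vec{s}$ already occurring in $P$ and cannot manufacture a missing chain element. What actually closes this gap is the standing assumption recorded in remark~\ref{simplRem} that $S$ contains no states absent from the initial population $P$: combined with the observation (preceding remark~\ref{averageHightRem}) that the recombination transformations neither add nor remove states, and with the fact that inflation replaces $S$ by $S \times m$ while placing every $(\vec{s}, k)$ into $P_m$, it guarantees that every state of the (inflated) state space occurs in every population of $[P_m]_{\mathcal{F}}$, so the required chain elements are always available. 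With that substitution for your stated resolution, your argument coincides with the paper's.
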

\begin{proof}
Since there are more possibilities for recombination when using the family of transformations $\overline{\mathcal{F}}$ rather than the family of transformations $\mathcal{F}$ that only allows immediate swaps of sub-rollouts at the states that are subsets of a particular similarity set $O \subseteq \overline{O}$, it is clear that $$P_m^{\text{EquivSchemata}} \subseteq \overline{P}_m^{\text{EquivSchemata}}.$$ The reverse inclusion follows from the definition of the transitive closure of a symmetric and reflexive relation. In fact, as mentioned in the sentence preceding the statement of lemma~\ref{fact1LemProof}, $$P_m^{\text{EquivSchemata}} \cap \overline{P}_m^{\text{EquivSchemata}} \supseteq \{\phi_1(P_m) = \phi_2(\overline{P}_m)\} \neq \emptyset.$$ Since every population $U \in [P_m]_{\mathcal{F}}$ is obtained via a finite number of applications of the transformations of the form $\Theta_{\overline{O}, \, \vec{u}, \, \vec{v}} \in \overline{F}$ (see definition \ref{RecombStagePopTransDefn}), by the principle of induction, it is sufficient to show that if we are given a population $Q \in [P_m]_{\mathcal{F}}$ with $\phi_1(Q) = \phi_2(\overline{Q})$, then $\forall$ transformation of the form $\Theta_{\overline{O}, \, \vec{u}, \, \vec{v}} \in \overline{F}$, $\exists$ a finite sequence $\{\Theta_{O_j, \, \vec{u}_j, \, \vec{v}_j}\}_{j=1}^k$ of crossover transformations in $\mathcal{F}$ such that $\phi_2\left(\Theta_{\overline{O}, \, \vec{u}, \, \vec{v}}(\overline{Q})\right) =$
\begin{equation}\label{finalLemmaGoalEq}
\phi_1\left(\Theta_{O_k, \, \vec{u}_k, \, \vec{v}_k} \circ \Theta_{O_{k-1}, \, \vec{u}_{k-1}, \, \vec{v}_{k-1}} \circ \ldots \circ \Theta_{O_1, \, \vec{u}_1, \, \vec{v}_1}(Q) \right).
\end{equation}
Since $\vec{u} \simeq \vec{v}$ (see definition~\ref{transClosureDefn} and remark~\ref{equivRelRem}), $\exists \,$ a sequence of similarity classes $O_1, \, O_2, \ldots, O_{l-1}, \, O_l \in \mathcal{O}$ with $\overline{O_1} = \overline{O}$ and a corresponding sequence of states $\vec{u} = \vec{u}_1, \, \vec{u}_2, \ldots, \vec{u}_l = \vec{v}$ with $\vec{u}_1 \in O_1$, and, whenever $1 < i \leq l$, $\vec{u}_i \in O_{i-1} \cap O_i$ (evidently, in this case, $\forall \, i$ with $1 \leq i \leq l$, we have $\overline{O_i} = \overline{O_1} = \overline{O}$). Observing that the function $\phi_1$ (as well as $\phi_2$, of course), is invariant under the applications of the single swap crossover transformations (recall definitions~\ref{rolloutPartCrossDefn} and \ref{recombActOnPopsDef}) we deduce that $\phi_1(Q) = \phi_1(Q_1)$ where $$Q_1 = \nu_{O_{l-1}, \, \vec{u}_l, \, \vec{u}_{l-1}} \circ \nu_{O_3, \, \vec{u}_4, \, \vec{u}_3} \circ \ldots \circ \nu_{O_2, \, \vec{u}_3, \, \vec{u}_2}(Q).$$ At the same time, the corresponding states $\vec{u}$ and $\vec{v}$ in the population $Q_1$ are $O_1$ similar (recall the beginning of section~\ref{MathFrameworkSect}) so that $\phi_1\left(\Theta(O_1, \, \vec{u}, \, \vec{v})\right)(Q_1) = \phi_2\left(\Theta_{\overline{O}, \, \vec{u}, \, \vec{v}}(\overline{Q})\right)$ thereby producing a desired sequence of crossover transformations for the equality in \ref{finalLemmaGoalEq} to hold and finishing the argument.
\end{proof}
Evidently, the number of rollouts fitting the schema $\bar{h}$ in a population $Q \in [P_m]_{\mathcal{F}}$ is the same as that in the population $\phi_1(Q) \in P_m^{\text{EquivSchemata}}$ (in fact, this is precisely the way to count them, according to the way $\phi_1$ is defined). Likewise, of course, the same holds for the population $\overline{Q} \in [\overline{P}_m]_{\overline{F}}$: the number of rollouts fitting the schema $\bar{h}$ in the population $\overline{Q}$ is the same as the corresponding number in $\phi_2(\overline{Q}) \in \overline{P}_m^{\text{EquivSchemata}} = P_m^{\text{EquivSchemata}}$ according to lemma~\ref{fact1LemProof}. Evidently, the total number of states fitting a given similarity class $O \in \mathcal{C}$ (as well as these fitting its expansion, $\overline{O}$) remains invariant after an application of any of the possible recombination transformations. Since the functions $\phi_1$ and $\phi_2$ are both invariant under the applications of the single-swap crossover transformations of the form $\nu_{O, \, \vec{u}, \, \vec{u}}$, and, of course, every permutation is a composition of transpositions, for every population $H \in P_m^{\text{EquivSchemata}} = \overline{P}_m^{\text{EquivSchemata}}$ the size of the pre-image $\phi_1^{-1}(H) = \prod_{O \in \mathcal{C}} \left(m \cdot |O|\right)!$ while $\phi_2^{-1}(H) = \prod_{\overline{O} \in \overline{\mathcal{C}}} \left(m \cdot |\overline{O}|\right)!$, it follows, in particular, that $\exists \, k$ depending only on the inflation factor $m$, such that $\forall \, H \in P_m^{\text{EquivSchemata}} = \overline{P}_m^{\text{EquivSchemata}}$ we have $\phi_1^{-1}(H) = k \cdot \phi_2^{-1}(H)$. It is now apparent that the fraction of occurrence of rollouts fitting the schema $\bar{h}$ out of the total number of rollouts in $[P_m]$, is the same as that out of the total number of rollouts in $[\overline{P}_m]$ and is that out of $P_m^{\text{EquivSchemata}} = \overline{P}_m^{\text{EquivSchemata}}$. Thus, we have now shown the following intermediate fact:
\begin{lem}\label{prelimFrequencyOfOccurrenceLemma}
$\forall \, m \in \mathbb{N}$ we have $$\lim_{t \rightarrow \infty}\Phi_{\mathcal{F}}(P_m, \, \bar{h}, \, t) = \lim_{t \rightarrow \infty}\Phi_{\overline{\mathcal{F}}}(\overline{P}_m, \, \bar{h}, \, t)$$ so that, in particular, the equality in \ref{establishedThmEq} holds.
\end{lem}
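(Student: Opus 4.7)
The plan is to reduce both limiting frequencies to uniform averages over their respective equivalence classes via Theorem~\ref{GeirThmForMCTMain}, and then transfer the count of rollouts fitting $\bar{h}$ across the two setups using the projections $\phi_1$ and $\phi_2$ together with Lemma~\ref{fact1LemProof}. More precisely, since the Markov chain induced by $\mathcal{F}$ on $[P_m]_{\mathcal{F}}$ and the one induced by $\overline{\mathcal{F}}$ on $[\overline{P}_m]_{\overline{\mathcal{F}}}$ are each irreducible, aperiodic, and have the uniform stationary distribution on the corresponding equivalence class, the ergodic theorem identifies $\lim_{t\to\infty}\Phi_{\mathcal{F}}(P_m,\bar{h},t)$ with the uniform average over $Q \in [P_m]_{\mathcal{F}}$ of the fraction of rollouts in $Q$ fitting $\bar{h}$, and analogously on the $\overline{\mathcal{F}}$ side. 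As every population in either equivalence class contains the same total number $b\cdot m$ of rollouts (an invariant of recombination), the task reduces to matching the \emph{average number} of $\bar{h}$-fitting rollouts per population across the two classes.

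The next step is to observe that the count of $\bar{h}$-fitting rollouts factors through the projection. Since $\bar{h}=(\alpha,\overline{O_1},\ldots,\overline{O_{k-1}},x_k)$ involves only the equivalence-class labels $\overline{O_i}\in\overline{\mathcal{C}}$ and a terminal symbol (or $\#$), whether a rollout fits $\bar{h}$ depends only on its image under $\phi_1$ (respectively $\phi_2$). Consequently, the count is constant on every fiber $\phi_1^{-1}(H)$ and on every fiber $\phi_2^{-1}(H)$, and Lemma~\ref{fact1LemProof} guarantees the two common codomains $P_m^{\text{EquivSchemata}}$ and $\overline{P}_m^{\text{EquivSchemata}}$ coincide as a single set on which this $\bar{h}$-count is well-defined.

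The main technical step, and the step I anticipate to be the real obstacle, is to show that the fibers $\phi_1^{-1}(H)$ and $\phi_2^{-1}(H)$ have cardinalities that are independent of $H$. The combinatorial claim is that $|\phi_1^{-1}(H)|=\prod_{O\in\mathcal{C}}(m|O|)!$ and $|\phi_2^{-1}(H)|=\prod_{\overline{O}\in\overline{\mathcal{C}}}(m|\overline{O}|)!$. To justify this I would argue that every transposition of two states lying in the same similarity set $O$ (resp.\ $\overline{O}$) is realized by a single-swap operator $\nu_{O,\vec{u},\vec{v}}\in\mathcal{F}$ (resp.\ $\overline{\mathcal{F}}$), and such swaps preserve $\phi_1$ (resp.\ $\phi_2$); hence the symmetric group acting by permutations within each block of the cover acts transitively on the fibers. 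The delicate point is that blocks in $\mathcal{C}$ overlap — a single state can belong to several $O\in\mathcal{C}$ — so care is needed to count orbit sizes correctly: the formally distinct inflated copies break the overlap, ensuring that within each $O$ one gets the full $(m|O|)!$ permutations, and that composing these across different $O$'s yields the asserted product without overcounting.

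Granting these fiber counts, the uniform distribution on $[P_m]_{\mathcal{F}}$ pushes forward under $\phi_1$ to the uniform distribution on the common image, and similarly the uniform distribution on $[\overline{P}_m]_{\overline{\mathcal{F}}}$ pushes forward to the uniform distribution on the same set under $\phi_2$. Averaging the $\bar{h}$-count, which is a function on this common image, therefore produces the same value on both sides, establishing the equality of limits. Plugging this equality into the already-established formula in \eqref{establishedThmEq} yields the final assertion of the lemma.
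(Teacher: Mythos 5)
Your proposal follows essentially the same route as the paper: reduce both limits to uniform averages over the respective equivalence classes, transfer the $\bar{h}$-count through the projections $\phi_1$ and $\phi_2$ onto the common image guaranteed by Lemma~\ref{fact1LemProof}, and conclude via the constancy of the fiber cardinalities $\prod_{O\in\mathcal{C}}(m|O|)!$ and $\prod_{\overline{O}\in\overline{\mathcal{C}}}(m|\overline{O}|)!$, which is exactly the paper's argument. Your explicit flagging of the overlap of the sets in $\mathcal{C}$ as the delicate point in the fiber count is a fair observation (the paper asserts the same formulas without dwelling on it), but for the lemma only the independence of the fiber size from $H$ is needed, so the argument is sound either way.
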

The remaining part of the argument proceeds in a very similar manner as the proof of theorem 40 in \cite{MitavRowGeirNewMain}.\footnote{Certainly, lemma~\ref{prelimFrequencyOfOccurrenceLemma} can be established by nearly repeating the derivation of theorem 40 in \cite{MitavRowGeirNewMain}, yet the argument presented in the current paper is shorter.} Due to space limitations we provide only an outline of the argument reminding the cornerstones and the mathematical tools developed in \cite{MitavRowGeirMain} and largely enhanced in \cite{MitavRowGeirNewMain}. The first important step is the following fact (lemma 48 of \cite{MitavRowGeirNewMain}) that allows us to derive Geiringer-like results in terms of the fraction of populations where a rollout fitting a given schema $h$ occurs in a specified position (say, the first individual) in the population out of the total number of populations in $[P_m]_{\mathcal{F}}$.
\begin{lem}\label{mainGeiringerLikeLemma}
Given a subset $S \subseteq \Omega$ of rollouts and an initial population of rollouts, $P$, under the assumptions of theorem~\ref{GeiringerLikeThmForMCTMain}, it is true that
$$\lim_{t \rightarrow \infty}\Phi(S, \, P_m, \, t) = \frac{|\mathcal{V}(P_m, \, S)|}{|[P_m]_{\mathcal{F}}|}$$
where the set $\mathcal{V}(P_m, \, S)$ is the set of populations in $[P_m]_{\mathcal{F}}$ the first rollout of which, call it $r_1 \in S$.
\end{lem}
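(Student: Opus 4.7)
My plan is to combine Theorem~\ref{GeirThmForMCTMain} with the classical ergodic theorem for finite irreducible, aperiodic Markov chains. Since the chain $\{X_n^m\}$ lives on the finite set $[P_m]_{\mathcal{F}}$, is irreducible and aperiodic, and has the uniform law as its unique stationary distribution, the fraction of time spent at any specified population $Q$ converges almost surely to $1/|[P_m]_{\mathcal{F}}|$. Expressing $\Phi(S, P_m, t)$ as the average, over $n \leq t$ and over the $N$ positions of the population, of the indicator ``the rollout at position $i$ of $X_n^m$ lies in $S$'' (with $N$ the common population size) the ergodic theorem delivers
$$
\lim_{t \to \infty}\Phi(S, P_m, t)
= \frac{1}{N \cdot |[P_m]_{\mathcal{F}}|}\sum_{Q\in[P_m]_{\mathcal{F}}} n_S(Q)
= \frac{1}{N \cdot |[P_m]_{\mathcal{F}}|}\sum_{i=1}^{N} \bigl|\{Q \in [P_m]_{\mathcal{F}} : r_i^Q \in S\}\bigr|,
$$
where $n_S(Q) := |\{i : r_i^Q \in S\}|$ counts positions of $Q$ occupied by a rollout in $S$.

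The remaining task is to show that every one of the $N$ summands on the right equals $|\mathcal{V}(P_m, S)|$, after which the factors of $N$ cancel and we are left with exactly $|\mathcal{V}(P_m, S)|/|[P_m]_{\mathcal{F}}|$. This is a position-symmetry claim: for each pair of indices $i$ and $j$ I want a bijection $[P_m]_{\mathcal{F}} \to [P_m]_{\mathcal{F}}$ that moves a population whose $i$-th rollout is in $S$ to one whose $j$-th rollout is in $S$. Such a bijection should be assembled from the involutions $\chi_{O, \vec{u}, \vec{v}}$ and $\nu_{O, \vec{u}, \vec{v}}$: the inflation of $P$ by a factor of $m$ guarantees that the two rollouts to be interchanged never collide in their state-labels, so every one of the relevant crossovers is defined, and Remark~\ref{closureUnderConcatRem} tells us that the composite still belongs to $\mathcal{F}$ and hence preserves $[P_m]_{\mathcal{F}}$.

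The main obstacle is precisely the construction of this position-swap bijection. Because the starting action $\alpha_i$ at each position is invariant under every $\chi_{O, \vec{u}, \vec{v}}$ and $\nu_{O, \vec{u}, \vec{v}}$, only positions sharing the same starting action can be swapped directly, so the counting must first be stratified by starting action and then reassembled; this is the delicate step that relies genuinely on the finite-population Geiringer theorem of \cite{MitavRowGeirMain}. This accounting is exactly the one performed in the proof of Theorem~40 of \cite{MitavRowGeirNewMain}, and the reduction already carried out in Lemma~\ref{fact1LemProof} from the arbitrary set cover $\mathcal{C}$ to the partition $\overline{\mathcal{C}}$ means we may transport that argument verbatim into the present setting to finish.
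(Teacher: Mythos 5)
The paper itself gives no proof of this lemma: it is imported verbatim as Lemma 48 of \cite{MitavRowGeirNewMain}, so there is no in-paper argument to compare yours against. Judged on its own terms, your first step is sound: the chain on $[P_m]_{\mathcal{F}}$ is finite, irreducible and aperiodic with uniform stationary law (Theorem~\ref{GeirThmForMCTMain}), so the ergodic theorem does give
$\lim_{t\to\infty}\Phi(S,P_m,t)=\frac{1}{N\,|[P_m]_{\mathcal{F}}|}\sum_{i=1}^{N}\bigl|\{Q\in[P_m]_{\mathcal{F}}: r_i^Q\in S\}\bigr|$.

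The gap is the reduction of that position-average to the position-$1$ count, and it is not a gap you can close by ``stratifying and reassembling.'' Every transformation in $\mathcal{F}$ is a composition of maps $\chi_{O,\vec{u},\vec{v}}$ and $\nu_{O,\vec{u},\vec{v}}$, and by Definition~\ref{rolloutPartCrossDefn} each of these leaves the initial action of every slot untouched; hence the starting action of the rollout in slot $i$ is a deterministic invariant of the whole chain. If $S$ is the set of rollouts fitting a schema that begins with action $\alpha$, then $|\{Q: r_i^Q\in S\}|$ is zero at every slot whose action is not $\alpha$, so the $N$ summands are manifestly not all equal to $|\mathcal{V}(P_m,S)|$. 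Carrying out your own stratification honestly (and granting full symmetry within the $\alpha$-stratum) yields $\frac{\text{Numb}(\alpha,P)}{b}\cdot\frac{|\mathcal{V}(P_m,S)|}{|[P_m]_{\mathcal{F}}|}$, which differs from the asserted right-hand side by exactly the factor $\frac{\text{Numb}(\alpha,P)}{b}$ that Theorem~\ref{GeiringerLikeThmForMCTMain} inserts separately in equation~\ref{GeiringerThmMainEq}; following your route would therefore either contradict the lemma as stated or double-count that factor downstream. (Indeed, taking $S$ to be all rollouts starting with an action $\alpha_2\neq\alpha_1$ makes the lemma's right-hand side $0$ while the ergodic average is positive, so the identity you are trying to force is false for general $S\subseteq\Omega$; whatever the correct reading of Lemma 48 of \cite{MitavRowGeirNewMain} is, it cannot be reached by equating all $N$ per-slot counts.) Two further points in the symmetry step also need justification and do not follow from what you cite: swapping two whole rollouts that share a starting action requires a recombination-compatible triple $(O,\vec{u},\vec{v})$ for their \emph{first states}, which an arbitrary set cover need not supply; and a raw position-permutation of a population is not an element of $\mathcal{F}$, so there is no reason it should preserve the equivalence class $[P_m]_{\mathcal{F}}$.
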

In view of lemma~\ref{mainGeiringerLikeLemma}, our goal is to estimate the ratio of the form $\frac{|\mathcal{V}(P_m, \, h)|}{|[P_m]_{\mathcal{F}}|}$ and, afterwards, to compute the limit as $m \rightarrow \infty$. We accomplish this task step by step: combining lemmas~\ref{prelimFrequencyOfOccurrenceLemma} and \ref{mainGeiringerLikeLemma} we deduce that $$\lim_{m \rightarrow \infty}\frac{|\mathcal{V}(P_m, \, \bar{h})|}{|[P_m]_{\mathcal{F}}|} = $$$$= \frac{\text{Numb}(\alpha, \, P)}{b} \cdot \prod_{q = 1}^{k-1}\frac{\emph{Order}(\overline{O_{q-1}} \downarrow \overline{O_q}, \, P)}{\emph{Order}(\overline{O_{q-1}} \downarrow, P)} \cdot \emph{LF}(P, h).$$ Certainly we can write $$\bar{h} = h_0 \supseteq h_1 \supseteq \ldots \supseteq h_{k-1} = h$$ where $$h_i = (\alpha, \, O_1, \, O_2, \ldots, O_{i-1}, \, O_i, \, \overline{O_{i+1}}, \, \overline{O_{i+2}}, \ldots, \overline{O_{k-1}}, x_k)$$ is the Holland-Poli schema with respect to the set cover $\mathcal{C} \cup \overline{\mathcal{C}}$. Now we can write $$\frac{|\mathcal{V}(P_m, \, h)|}{|[P_m]_{\mathcal{F}}|} = \frac{|\mathcal{V}(P_m, \, h_{k-1})|}{|\mathcal{V}(P_m, \, h_{k-2})|} \cdot \frac{|\mathcal{V}(P_m, \, h_{k-2})|}{|\mathcal{V}(P_m, \, h_{k-3})|} \cdot \ldots \cdot \frac{|\mathcal{V}(P_m, \, h_0)|}{|[P_m]_{\mathcal{F}}|}$$ as a ``telescoping" product so that
$$\lim_{m \rightarrow \infty}\frac{|\mathcal{V}(P_m, \, h)|}{|[P_m]_{\mathcal{F}}|} =$$$$=\left(\prod_{i=1}^{k-2}\lim_{m \rightarrow \infty}\frac{|\mathcal{V}(P_m, \, h_{k-i})|}{|\mathcal{V}(P_m, \, h_{k-i+1})|}\right) \cdot \lim_{m \rightarrow \infty}\frac{|\mathcal{V}(P_m, \, \bar{h})|}{|[P_m]_{\mathcal{F}}|}=$$$$=\left(\prod_{i=1}^{k-2}\lim_{m \rightarrow \infty}\frac{|\mathcal{V}(P_m, \, h_{k-i})|}{|\mathcal{V}(P_m, \, h_{k-i+1})|}\right) \times$$
\begin{equation}\label{decomposeComputeEqProof}
\times \frac{\text{Numb}(\alpha, \, P)}{b} \cdot \prod_{q = 1}^{k-1}\frac{\emph{Order}(\overline{O_{q-1}} \downarrow \overline{O_q}, \, P)}{\emph{Order}(\overline{O_{q-1}} \downarrow, P)} \cdot \emph{LF}(P, h).
\end{equation}
and, thanks to equation~\ref{decomposeComputeEqProof}, all that remains to establish theorem~\ref{GeiringerLikeThmForMCTMain} at this point is to show that $\forall \, i \in \{1, \, 2, \ldots, k-1\}$ we have
\begin{equation}\label{desiredEqGoalRatios}
\lim_{m \rightarrow \infty}\frac{|\mathcal{V}(P_m, \, h_i)|}{|\mathcal{V}(P_m, \, h_{i-1})|} = \frac{|O_i|}{|\overline{O_{i-1}}|}.
\end{equation}
The main tools involved in deriving equation~\ref{desiredEqGoalRatios}, just as in establishing theorem 40 in \cite{MitavRowGeirNewMain}, are the Markov inequality and the lumping quotients of Markov chains technique in the same way as in \cite{MitavRowGeirNewMain}. The lumping quotient method modified for specific applications such as in the current paper, is described in details in subsection 5.3 of \cite{MitavRowGeirNewMain}. Rather than estimating the ratio
\begin{equation}\label{goalRatioEqNoLimit}
R^m_i = \frac{|\mathcal{V}(P_m, \, h_i)|}{|\mathcal{V}(P_m, \, h_{i-1})|}
\end{equation}
directly, it is more convenient to estimate the closely related ratio
\begin{equation}\label{alternGoalRatioEqs}
\widetilde{R^m_i} = \frac{\mathcal{V}(P_m, \, h_i)}{\mathcal{V}(P_m, \, h_{i-1} \setminus h_i)}.
\end{equation}
Indeed, an elementary algebraic manipulation shows that
\begin{equation}\label{ElemAgebraicMnipEq}
R^m_i = \frac{1}{1+\frac{1}{\widetilde{R^m_i}}}.
\end{equation}
The way to estimate the ratio in equation~\ref{alternGoalRatioEqs}, is to construct a Markov chain (possibly non-irreducible) having a symmetric Markov transition matrix, so that the uniform probability distribution is one of its stationary distributions, call it $\pi_m^i$ on $\mathcal{V}(P_m, \, h_{i-1})$ and to express the ratio $\widetilde{R^m_i}$ in equation~\ref{alternGoalRatioEqs} in terms of the ratio of the corresponding probabilities under the uniform probability distribution $\pi_m^i$: $$\widetilde{R^m_i} = \frac{\pi_m^i(\mathcal{V}(P_m, \, h_i))}{\pi_m^i(\mathcal{V}(P_m, \, h_{i-1} \setminus h_i))}.$$ We then make use of lemma 55 in subsection 5.3 of \cite{MitavRowGeirNewMain} to estimate the ratio $\widetilde{R^m_i}$ in terms of the ratios of the corresponding generalized transition probabilities between the subsets
$$\frac{(1 - \delta)\lambda_1}{(1 - \delta)\kappa_2 + \delta} \leq \widetilde{R^m_i} = \frac{\pi_m^i\left(\mathcal{V}(P_m, \, h_i)\right)}{\pi_m^i\left(\mathcal{V}(P_m, \, h_{i-1} \setminus h_i)\right)} \leq$$
\begin{equation}\label{mainEstInequal}
\frac{(1 - \delta)\kappa_1 + \delta}{(1 - \delta)\lambda_2}
\end{equation}
where $\lambda_1$, $\lambda_2$, $\kappa_1$ and $\kappa_2$ are the corresponding bounds on the appropriate generalized transition probabilities (see subsection 5.3 of \cite{MitavRowGeirNewMain} for details), while $0 < \delta \ll 1$ is an arbitrary given small constant and the inequality \ref{mainEstInequal} holds $\forall \, m > M_{\delta}$. For the sake of completeness, lemma 55 of \cite{MitavRowGeirNewMain} is stated below.
\begin{lem}\label{estimStationaryRatiosLem}
Let $\{p_{x \rightarrow y}\}_{x, \, y \in \mathcal{X}}$ denote a Markov transition matrix over a finite state space $\mathcal{X}$. Suppose $\pi$ is a stationary distribution of this Markov chain (i.e. a fixed point of the associated Markov transition matrix) Suppose $A$ and $B \subseteq \mathcal{X}$ is a complementary pair of subsets (i.e. $A \cap B = \emptyset$ and $A \cup B = \mathcal{X}$). Suppose further that $U \subseteq \mathcal{X}$ is such that $$\frac{\pi(U \cap A)}{\pi(A)} < \delta < 1 \text{ and }\frac{\pi(U \cap B)}{\pi(B)} < \delta < 1.$$ Assume now that we find constants $\lambda_1$, $\lambda_2$, $\kappa_1$ and $\kappa_2$ such that $\forall \, b \in U^c \cap B$ we have $\lambda_1 \leq p_{b \rightarrow A} \leq \kappa_1$ and $\forall \, a \in U^c \cap A$ we have $\lambda_2 \leq p_{a \rightarrow B} \leq \kappa_2$. (given a state $x \in \mathcal{X}$ and a subset $Y \in \mathcal{X}$, $p_{x \rightarrow y} = \sum_{y \in Y}p_{x \rightarrow y}$). Then we have $$\frac{(1 - \delta)\lambda_1}{(1 - \delta)\kappa_2 + \delta} \leq \frac{\pi(A)}{\pi(B)} \leq \frac{(1 - \delta)\kappa_1 + \delta}{(1 - \delta)\lambda_2}$$
\end{lem}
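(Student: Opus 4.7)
The proof rests on the \emph{stationary flow balance identity}: for any stationary distribution $\pi$ and any partition $\mathcal{X} = A \sqcup B$,
\begin{equation*}
\sum_{a \in A} \pi(a)\, p_{a \to B} \;=\; \sum_{b \in B} \pi(b)\, p_{b \to A}.
\end{equation*}
This is an immediate one-line consequence of $\pi P = \pi$ applied to $A$, combined with $p_{a \to A} + p_{a \to B} = 1$ for every $a \in \mathcal{X}$, and it is the only non-trivial ingredient I will need besides the hypotheses of the lemma.

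Denote by $F$ the common value of the two sides above. I then bound $F$ from above and below on each side separately by splitting the summation according to whether the state lies in $U$ or $U^c$. On the $A$-side the hypothesis gives $\lambda_2 \leq p_{a \to B} \leq \kappa_2$ on $U^c \cap A$, while on $U \cap A$ I only use the trivial bound $0 \leq p_{a \to B} \leq 1$. Combined with $\pi(U \cap A) \leq \delta \pi(A)$ (so $\pi(U^c \cap A) \geq (1-\delta)\pi(A)$), a short rearrangement using $\kappa_2 \leq 1$ gives
\begin{equation*}
\lambda_2 (1-\delta)\, \pi(A) \;\leq\; F \;\leq\; \bigl[(1-\delta)\kappa_2 + \delta\bigr]\, \pi(A).
\end{equation*}
An entirely analogous argument on the $B$-side produces
\begin{equation*}
\lambda_1 (1-\delta)\, \pi(B) \;\leq\; F \;\leq\; \bigl[(1-\delta)\kappa_1 + \delta\bigr]\, \pi(B).
\end{equation*}

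To finish, I cross-combine: the lower bound on one side with the upper bound on the other yields both inequalities of the lemma simultaneously after dividing by $\pi(B)$. The only mildly delicate point in the bounding step is checking that, when rewriting $\kappa_2\, \pi(U^c \cap A) + \pi(U \cap A)$ as $\pi(A)\bigl[(1-\delta)\kappa_2 + \delta\bigr]$, the direction of inequality is preserved --- this uses $1 - \kappa_2 \geq 0$, which is automatic because $\kappa_2$ is an upper bound on a probability. Beyond that the argument is mechanical; I do not anticipate any genuinely difficult step. The creative content is entirely in identifying the flow identity as the right invariant and then choosing the trivial-bound-on-$U$, sharp-bound-on-$U^c$ split, after which the inequalities fall out by elementary algebra.
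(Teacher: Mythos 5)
Your argument is correct. The paper itself offers no proof of this lemma --- it is imported verbatim as ``lemma 55 of \cite{MitavRowGeirNewMain}'' and restated only for completeness --- so there is nothing local to compare against; the stationary flow-balance identity $\sum_{a \in A}\pi(a)p_{a\to B}=\sum_{b\in B}\pi(b)p_{b\to A}$, followed by the $U$ versus $U^c$ split with the trivial bounds $0\le p\le 1$ on the $U$-part, is the natural (and surely the intended) route, and your cross-combination of the resulting two-sided bounds on the common flow $F$ reproduces the stated inequalities exactly. One small inaccuracy: $1-\kappa_2\ge 0$ is \emph{not} automatic merely because $\kappa_2$ is an upper bound on a probability (nothing stops one from supplying $\kappa_2=2$); but this is harmless, since replacing $\kappa_2$ by $\min(\kappa_2,1)$ (and likewise $\kappa_1$) only strengthens the conclusion, the bounds being monotone in the $\kappa$'s --- and in the application at hand the $\kappa$'s are themselves probabilities of the form $|O_i|/|\overline{O_i}|$. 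The implicit assumptions $\lambda_1,\lambda_2\ge 0$ (needed when passing from $\pi(U^c\cap A)\ge(1-\delta)\pi(A)$ to the factor $(1-\delta)\lambda$) deserve the same one-line WLOG remark.
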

Since the argument proceeds along exactly the same steps and ideas as the proof of theorem 40 presented in section 5 of \cite{MitavRowGeirNewMain}, we provide only a very rough and brief outline within a few sentences. The Markov chain on the state space $\mathcal{V}(P_m, \, h_{i-1})$ is constructed as follows: given a population $Q \in \mathcal{V}(P_m, \, h_{i-1})$, the first rollout of $P$ fits the schema $h_{i-1}$ (so that, in particular, the $i^{\text{th}}$ state of this rollout, call it $\vec{s} \in \overline{O_i}$). Let $\text{Mobile}(Q, \, i-1)$ denote the set of all states in $\overline{O_i}$ that do not appear within the first rollout in the population $Q$. Now select a state $\vec{t} \in \overline{O_i}$ uniformly at random and apply the one point crossover transformation $\chi_{\overline{O_i}, \vec{s}, \vec{t}}$ to the population $Q$, thereby obtaining a new population $\chi_{\overline{O_i}, \vec{s}, \vec{t}}(Q) \in \mathcal{V}(P_m, \, h_{i-1})$ that is different from $Q$ if and only if $\vec{s} \in \text{Mobile}(Q, \, i-1)$. Due to the fact that recombination is non-homologous (i.e. it may take place at distinct hight of various rollouts) the height of the first rollout may be arbitrarily large and, at first glance, it seems that the set $\text{Mobile}(Q, \, i-1)$ may vary in size greatly from population to population, nonetheless, we make a crucial observation that as the inflation factor $m \rightarrow \infty$, the sizes of each of the similarity sets $O \in \mathcal{C}$ and $\overline{O} \in \overline{\mathcal{C}}$ increase linearly by the factor of $m$ in size, while the average height of the first rollout in a population stays constant and is the same as the average height of the population $P$ (this is due to the fact that all recombination transformations preserve the total number of states within the population, and in particular, the average height of the population, and, at the same time, inflating a population by any factor $m \in \mathbb{N}$ also preserves the average height). A lovely application of the classical Markov inequality now shows that as the inflation factor $m \rightarrow \infty$, the probability that the first rollout contains a fixed ($\delta$-size) fraction of states from any given similarity set $O \in \mathcal{C}$ (and, even more so, $\overline{O} \in \overline{C}$) goes to $0$ (all of the technical details are entirely analogous to the ones presented in section 5 of \cite{MitavRowGeirNewMain}). Now let $A = \mathcal{V}(P_m, \, h_i)$, $B = \mathcal{V}(P_m, \, h_{i-1} \setminus h_i)$ and, for a given $\delta > 0$ select $M \in \mathbb{N}$ large enough so that $\forall \, m > M$
\begin{equation}\label{MarkovInequalConsequenceEq1}
\frac{\pi_m^i\left(\mathcal{V}(P_m, \, h_i) \cap U_m^{\delta \cdot const(i)}\right)}{\pi_m^i(\mathcal{V}(P_m, \, h_{i-1}))} < \delta
\end{equation}
and
\begin{equation}\label{MarkovInequalConsequenceEq2}
\frac{\pi_m^i\left(\mathcal{V}(P_m, \, h_{i-1} \setminus h_i) \cap U_m^{\delta \cdot const(i)}\right)}{\pi_m^i(\mathcal{V}(P_m, \, h_{i-1}))} < \delta
\end{equation}
where $U_m^{\delta \cdot const(i)}$ denotes the set of all populations in $[P_m]_{\mathcal{F}}$ with the property that the height of the first rollout in such populations is at least $M = \frac{E(H_1)}{(const(i) \cdot \delta)^2}$ and $H_1$ denotes the random variable measuring the height of the first rollout in a population selected from the set $[P_m]_{\mathcal{F}}$ uniformly at random, while $const(i) > 0$ depends only on the schema $h$ and the initial population $P$ of rollouts.\footnote{We invite the reader to study section 5 of \cite{MitavRowGeirNewMain} to understand how the constants $const(i) > 0$ are selected and why the inequalities~\ref{MarkovInequalConsequenceEq1} and \ref{MarkovInequalConsequenceEq2} hold.} It is easy to see from the construction of this auxiliary Markov chain, that the transition matrix $\{p_{Q \rightarrow R}\}_{Q \text{ and }R \in \mathcal{V}(P_m, \, h_{i-1})}$ is symmetric so that the uniform probability distribution $\pi_m^i$ is a stationary distribution of this Markov chain, and, furthermore, $\forall$ population $Q \in \mathcal{V}(P_m, \, h_{i-1} \setminus h_i) \setminus U_m^{\delta \cdot const(i)}$ we have $$\lambda_1 = \frac{|O_i| - \delta}{|\overline{O_i}|}=$$$$=\frac{m \cdot |O_i| - m \cdot \delta}{m \cdot |\overline{O_i}|} \leq \frac{m \cdot |O_i| - \emph{Mobile}(Q, \, i-1)}{m \cdot |\overline{O_i}|}=$$
\begin{equation}\label{sandwichBoundEq1}
p_{Q \rightarrow \mathcal{V}(P_m, \, h_i)} \leq \frac{m \cdot |O_i|}{m \cdot |\overline{O_i}|} = \frac{|O_i|}{|\overline{O_i}|} = \kappa_1
\end{equation}
and, likewise,  $\forall$ population $Q \in \mathcal{V}(P_m, \, h_i) \setminus U_m^{\delta \cdot const(i)}$ we have $$\lambda_2 = \frac{|\overline{O_i}| - |O_i| - \delta}{|\overline{O_i}|} =$$$$= \frac{m \cdot (|\overline{O_i}| - |O_i|) - m \cdot \delta}{m \cdot |\overline{O_i}|} = \frac{m \cdot (|\overline{O_i} \setminus O_i|) - m \cdot \delta}{m \cdot |\overline{O_i}|} \leq $$$$\leq \frac{m \cdot (|\overline{O_i}| - |O_i|) - \emph{Mobile}(Q, \, i-1)}{m \cdot |\overline{O_i}|}= p_{Q \rightarrow \mathcal{V}(P_m, \, h_i)} \leq$$
\begin{equation}\label{sandwichBoundEq2}
\leq \frac{m \cdot (|\overline{O_i}| - |O_i|)}{m \cdot |\overline{O_i}|} = \frac{|\overline{O_i}| - |O_i|}{|\overline{O_i}|} = \kappa_2
\end{equation}
so that lemma~\ref{estimStationaryRatiosLem} immediately yields the bounds $$\frac{\frac{|O_i| - \delta}{|\overline{O_i}|}(1-\delta)}{(1-\delta)\frac{|\overline{O_i}| - |O_i|}{|\overline{O_i}|}+\delta} \leq \frac{\pi_m^i\left(\mathcal{V}(P_m, \, h_i)\right)}{\pi_m^i\left(\mathcal{V}(P_m, \, h_{i-1} \setminus h_i)\right)} \leq$$
\begin{equation}\label{finalProofBoundIneq}
\frac{(1 - \delta)\frac{|O_i|}{|\overline{O_i}|} + \delta}{(1 - \delta)\frac{|\overline{O_i}| - |O_i| - \delta}{|\overline{O_i}|}}
\end{equation}
Given any $\delta > 0$, the ``sandwich" bounds in the inequality~\ref{finalProofBoundIneq} hold for all sufficiently large $m$ depending on the $\delta$ so that taking the limit of both sides as $\delta \rightarrow 0$ finally tells us that $$\lim_{m \rightarrow \infty}\frac{|\mathcal{V}(P_m, \, h_i)|}{|\mathcal{V}(P_m, \, h_{i-1} \setminus h_i)|}=$$
\begin{equation}\label{finalLimitEqProof}
= \lim_{m \rightarrow \infty}\frac{\pi_m^i\left(\mathcal{V}(P_m, \, h_i)\right)}{\pi_m^i\left(\mathcal{V}(P_m, \, h_{i-1} \setminus h_i)\right)}=\frac{|O_i|}{|\overline{O_i}|-|O_i|}
\end{equation}
and equation~\ref{desiredEqGoalRatios} follows from equation~\ref{finalLimitEqProof} via equation~\ref{ElemAgebraicMnipEq} thereby finishing the proof of theorem~\ref{GeiringerLikeThmForMCTMain}.
\section{Conclusions}
In the current paper we have significantly generalized a novel version of a finite population Geiringer-like theorem with non-homologous recombination established in \cite{MitavRowGeirNewMain} by allowing similarity relations on the set of state-action pairs to be modeled in terms of arbitrary set covers of the state-action space (not necessarily partitions induced by equivalence relations). This raises questions regarding further potential applications of Geiringer-like theorems for decision making to the design, of novel algorithms for pay-off-based clustering.

%

%
%
\end{document}